\documentclass[final,12pt]{clear2025} 

\usepackage{bm}
\usepackage{enumitem}
\usepackage{pgf,tikz} \usetikzlibrary{arrows, fit, shapes, arrows.meta, decorations, positioning, matrix}
\usepackage{tkz-graph}
\pgfdeclarelayer{bg}
\pgfsetlayers{bg,main}

\usepackage{multirow}
\usepackage{array}
\usepackage{mathrsfs}

\tikzset{dir/.style = { - >, thick}}
\tikzset{bidir/.style = {< - >, thick}}
\tikzset{nondir/.style = {Circle[open] - Circle[open], thick}}
\tikzset{anydir/.style = {Square - >, thick}}
\tikzset{anypart/.style = {Square - Circle[open], thick}}
\tikzset{partdir/.style = {Circle[open] - >, thick}}
\tikzset{any/.style = {Square - Square, thick}}

\newcommand{\G}{\mathcal{G}}

\newcommand{\p}{\mathcal{P}}
\newcommand{\M}{\mathcal{M}}
\newcommand{\D}{\mathcal{D}}

\newcommand{\I}{\mathcal{I}}
\newcommand{\K}{\mathcal{K}}
\newcommand{\X}{\mathbf{X}}
\newcommand{\V}{\mathbf{V}}
\newcommand{\E}{\mathbf{E}}
\newcommand{\R}{\mathbf{R}}
\newcommand{\F}{\mathbf{F}}

\newcommand{\adj}[2]{\mathrm{adj}_{{#1}}({#2})}

\newcommand{\pa}[2]{\mathrm{pa}_{{#1}}({#2})}

\newcommand{\an}[2]{\mathrm{an}_{{#1}}({#2})}
\newcommand{\de}[2]{\mathrm{de}_{{#1}}({#2})}
\newcommand{\possan}[2]{\mathrm{possan}_{{#1}}({#2})}
\newcommand{\possde}[2]{\mathrm{possde}_{{#1}}({#2})}
\newcommand{\dsep}[2]{\mathrm{dsep}_{{#1}}({#2})}
\newcommand{\possdsep}[2]{\mathrm{poss.dsep}_{{#1}}({#2})}
\newcommand{\possdsepp}[2]{\mathrm{pds}_{{#1}}({#2})}
\newcommand{\indep}{\perp\mkern-10mu\perp}
\newcommand{\any}{*\mkern-6mu-\mkern-7mu*}

\newcommand{\rightpart}{\circ\mkern-7mu\rightarrow}
\newcommand{\leftpartany}{*\mkern-8.5mu-\mkern-7mu\circ}
\newcommand{\rightpartany}{\circ\mkern-7mu-\mkern-8.5mu*}
\newcommand{\leftany}{\leftarrow\mkern-9.25mu *}
\newcommand{\rightany}{* \mkern-9.25mu\rightarrow}
\newcommand{\nondir}{\circ\mkern-6.5mu-\mkern-6.5mu\circ}
\DeclareMathAlphabet\mathbfcal{OMS}{cmsy}{b}{n}

\title[Causal discovery with tiered background knowledge and overlapping datasets]{
Constraint-based causal discovery with tiered background knowledge and latent variables in single or overlapping datasets}
\usepackage{times}

\clearauthor{
 \Name{Christine W Bang} \Email{bang@leibniz-bips.de}\\
 \AND
 \Name{Vanessa Didelez} \Email{didelez@leibniz-bips.de}\\
 \addr Leibniz Institute for Prevention Research and Epidemiology -- BIPS, Bremen, Germany\\ Faculty of Mathematics and Computer Science, University of Bremen, Bremen, Germany
}

\begin{document}

\maketitle

\begin{abstract}
  In this paper we consider the use of tiered background knowledge within constraint based causal discovery. Our focus is on settings relaxing causal sufficiency, i.e.\ allowing for latent variables which may arise because relevant information could not be measured at all, or not jointly, as in the case of multiple overlapping datasets. We first present novel insights into the properties of the `tiered FCI' (tFCI) algorithm. Building on this, we introduce a new extension of the IOD (integrating overlapping datasets) algorithm incorporating tiered background knowledge, the `tiered IOD' (tIOD) algorithm. 
  We show that under full usage of the tiered background knowledge tFCI and tIOD are sound, while simple versions of the tIOD and tFCI are sound and complete.
  We further show that the tIOD algorithm can often be expected to be considerably more efficient and informative than the IOD algorithm even beyond the obvious restriction of the Markov equivalence classes. We provide a formal result on the conditions for this gain in efficiency and informativeness.  Our results are accompanied by a series of examples illustrating the exact role and usefulness of tiered background knowledge.
\end{abstract}

\begin{keywords}
  Causal inference, graphical models, multi-cohort studies, temporal structure
\end{keywords}

\section{Introduction}
This work aims at exploiting tiered background knowledge for constraint-based causal discovery. The focus, here, is  on relaxing causal sufficiency, i.e. allowing for latent variables which may arise, for instance, because relevant information could not be measured jointly or not at all. The former occurs in cases where datasets stem from e.g. different studies that have some but not all measured variables in common. We then speak of multiple (partially) overlapping datasets.

In classical constraint-based causal discovery algorithms, such as the PC  \citep{spirtes2000causation} and FCI algorithms \citep{spirtes1999fci, zhang2008completeness}, the entire joint independence structure of the data is available, where the FCI algorithm allows for latent variables, i.e. unmeasured common causes of measured variables. The Integrating Overlapping Datasets (IOD) algorithm \citep{tillman2011learning} extends the FCI to multiple datasets, i.e. multiple independence structures that are each marginal over those variables in the other datasets that they do not contain. 
Allowing for latent variables clearly induces a lower degree of identifiability of causal relations. It is therefore important to efficiently use any available background knowledge  to improve identifiability. We focus on tiered background knowledge which arises in temporal data, in particular in cohort studies where multiple datasets occur for instance in multi-cohort designs \citep{o2022better}.

As the IOD algorithm builds on the FCI algorithm, we first consider incorporating tiered background knowledge into the FCI and build on this to propose an IOD algorithm exploiting tiered background knowledge; we refer to these as the `tiered' FCI/IOD, or `tFCI/tIOD' algorithms. 
While the tFCI algorithm has been implemented \citep{scheines1998, tfci, petersen2023} and applied \citep{lee2022causal} before, to our knowledge, only a few formal results have been proven. Therefore, we provide a formal description of the tFCI algorithm  and, as a step towards showing correctness of the tIOD, we show correctness of the tFCI.

While algorithms like the FCI represent lack of identification in form of non-directed edges, the output of the IOD algorithm is often more complex: Not only does it allow for different edges within an equivalence class (represented as PAG), but it can also comprise of different equivalence classes. Background knowledge is then useful in two regards: To restrict the equivalence classes and to reduce the number of possible equivalence classes. The proposed tIOD algorithm outputs a set of graphs that are all consistent with the information available in all datasets and with the known tiered ordering. Moreover, we show that with tiered background knowledge we can reduce the number of of possible equivalence classes even without orienting any additional edges. 

The motivation for our work is to be able to use data from multiple cohort studies (multi-cohorts) to learn causal structures across long time spans. Cohort studies are common in life sciences, such as life course epidemiology \citep{kuh2004life}, but the results here are valid for any type of data that has a tiered ordering. 
Tiered background knowledge is known to not only improve the informativeness of the estimated graphs \citep{bang2023we}, but also the accuracy of discovery algorithms in finite samples  \citep{bang2024improving}. 
While these previous results are limited to the case of causal sufficiency and a single dataset, the present work relaxes these assumptions.

\subsection{Related work}
A precursor of the IOD algorithm was the Integrating Overlapping Networks (ION) algorithm \citep{tillman2008integrating}. Building on the IOD, \citet{dhir2020integrating} propose orienting directed edges in the IOD algorithm using a bivariate causal discovery algorithm, but assume causal sufficiency and a particular data generating model. \citet{huang2020causal} introduce two approaches that learn the entire DAG over multiple overlapping datasets under assumptions of linearity and non-Gaussianity. 
 Other work on causal discovery for overlapping datasets first learns the individual models  and then combines them using a SAT solver \citep{triantafillou2010learning, tsamardinos2012towards, triantafillou2015constraint}. 
We believe that constraint-based approaches are preferable, specifically with cohort data, since the data is likely to have missing values and mixed variable types with non-linear relations, and constraint-based methods can accommodate these issues flexibly  \citep{witte2022multiple}.

\subsection{Overview}
In Section \ref{sec:overlap}, we give an introduction to the type of data setting that motivates our work. Section \ref{sec:eqclass} formally introduces  (tiered) background knowledge and restricted equivalence classes, while an overview and background on  the general  framework is provided in Appendix \ref{app:terminology}. In Sections \ref{sec:tfci}, \ref{sec:tiod} and \ref{sec:eff} we describe the tFCI and tIOD algorithms, and investigate their properties. In Section \ref{sec:discussion} we discuss the assumptions and limitations of the proposed approaches. We include pseudo-algorithms in Appendix \ref{app:pseudoalg} and proofs in Appendix \ref{app:proofs}.

\section{Overlapping cohort studies}\label{sec:overlap}
Our work is motivated by the wish to combine different cohort studies, i.e.\ separate temporally structured datasets,  
for causal discovery with the ultimate aim of learning  causal pathways over long time spans. It has been argued that multi-cohort designs can considerably advance life course research \citep{o2022better}.
The hope is to obtain a better understanding of e.g. how modifiable factors in early life and childhood affect health outcomes in adulthood and old age. 
An illustration  is given in in Figure \ref{fig:example:overlap}. The datasets might overlap in  different aspects: Some may have taken measurements during the same period on the same and different variables, as the international and national children cohort studies in Figure \ref{fig:example:overlap}; or they contain  measurements of the same concept (e.g. `healthy eating') at different time points, as the children's cohort studies and the adult health study with common measurements in adolescence.

\begin{figure}[!htbp]
\centering
\includegraphics{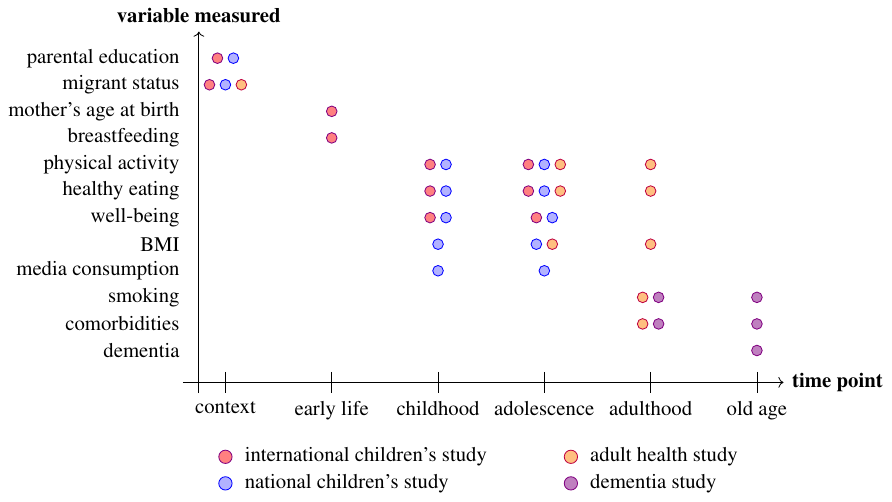}
\caption{Toy example of how four different cohort studies can overlap in time and variables.}
\label{fig:example:overlap}
\end{figure}

Thus, different cohort studies can overlap in variable sets and in time of measurements. We define a variable as being the same in two different dataset if and only if it is measured at the same time (or, here, life stage). This also means that two variables measuring the same concept, perhaps in the same dataset, are considered different if they do not refer to the same time point or life stage.
For example, consider the variable `physical activity' in Figure \ref{fig:example:overlap}. This is measured in three different datasets and at three different time points. Physical activity during childhood and adolescence in the international children's cohort study are considered two different variables, while physical activity during adolescence in the international children's cohort study, in the national cohort study, and in the adult health study are all considered the same variable.

When variables are measured in some but not all datasets, this could in principle be viewed as a missing data problem. However, a key difference is that some variables are {\em never} measured together so that there is no information on their joint distribution. In the above example, breastfeeding, BMI and dementia status are not measured jointly in any study. As pointed out by \citet{tillman2008integrating}, if we wanted to, say, impute the values of unmeasured variables the correct imputation models cannot be obtained from the measured data and would rely on correct prior knowledge of the (causal) data generating mechanism. Clearly, this is not viable in the situation where determining the causal model is the desired aim. 

While the example in Figure \ref{fig:example:overlap}  only refers to cohort studies, in practice we could have other  data structures. Although not required, our setup is most interesting in the case where all datasets  to be combined contain at least one variable that is also measured in at least one of the other datasets. 

\section{Background knowledge}\label{sec:eqclass}

The temporal structure of cohort studies induces tiered background knowledge and, in turn, restricted equivalence classes which we formalise here. 
We refer to Appendix \ref{app:terminology} for a general introduction to the relevant graphical concepts.  We  consider maximal ancestral graphs (MAGs) \citep{richardson2002ancestral}, which allow for latent variables, and partial ancestral graphs (PAGs) encoding the equivalence classes of  MAGs \citep{zhang2006causal}.

\begin{remark}
    General MAGs allow for selection bias, but here, we assume that the data contains no selection bias. Thus, we assume that there is an underlying true MAG  and that this MAG does not contain selection bias. This means that in the FCI algorithm, and consequently also in the IOD algorithm, the orientation rules R5-R7 can be omitted \citep{zhang2008completeness}. 
\end{remark}

We define \emph{background knowledge} $\K=(\R,\F)$ relative to a node set $\V$ as a set of \emph{required edges} $\R$ and a set of \emph{forbidden edges} $\F$. A graph $\G=(\V,\E)$ \emph{encodes}  $\K$ if $\E$ contains all edges in $\R$ and no edges in $\F$. A PAG $\p$ is \emph{consistent} with $\K$ if there is a MAG $\M\in [\p]$ encoding $\K$. We define the addition of background knowledge to a consistent PAG by orienting edges as in Algorithm \ref{alg:pk}. The output $\p_\K$ is a partial mixed graph (PMG) which may or may not be ancestral, and may or may not represent a restricted equivalence class. Throughout we  assume the background knowledge to be consistent with some true underlying MAG, and in that sense \emph{correct}.

\LinesNumbered
\SetKwInOut{Input}{input} 
\SetKwInOut{Output}{output}

\begin{algorithm}[!htbp]

\caption{Constructing $\p_\K$}
\label{alg:pk}
\Input{PAG $\p=(\V,\E)$ and consistent background knowledge $\K=(\R,\F)$.} 

\Output{PMG $\p_\K=(\V,\E')$}

$\E'=\E$

\ForAll{$\{ V_i\rightpartany V_j\}\in\E$}{

\uIf{$\{ V_i\rightarrow V_j\}\in\F$}{

replace $\{ V_i\rightpartany V_j\}$ with $\{ V_i\leftany V_j\}$ in $\E'$

\uElseIf{$\{ V_i\rightarrow V_j\}\in\R$}{

replace $\{ V_i\rightpartany V_j\}$ with $\{ V_i\rightarrow V_j\}$ in $\E'$

}

}

}

\end{algorithm}

Tiered background knowledge arises from partitioning the variables through a \emph{tiered ordering}, such that variables in later tiers cannot cause variables in earlier tiers.

\begin{definition}[Tiered ordering]
Let $\V$ be a node set of size $p$, and let $T\in\mathbb{N}$, $T \leq p$. A tiered ordering of the nodes in $\V$ is a map $\tau: \V\to \{ 1,\ldots ,T\}$ that assigns every node $V\in\V$ to a unique tier $t\in\{ 1,\ldots ,T\}$.
\end{definition}

\begin{definition}[Tiered background knowledge]
Let $\tau$ be a tiered ordering of the node set $\V$,  the corresponding tiered background knowledge $\K_{\tau}=(\R_{\tau},\F_{\tau})$  is then defined by  $\R_{\tau}=\emptyset$ and  $\F_{\tau}=\{ V_i \leftarrow V_j\mid \tau(V_i)<\tau (V_j), V_i,V_j\in\V \}$.
\end{definition}

If tiered background knowledge is consistent with a graph, then we say that the corresponding tiered ordering is consistent with the graph.

In Algorithm \ref{alg:pk} (line 4), a  tiered ordering $\tau$ with $\tau(A)<\tau(B)$  means that an edge $A\leftpartany B$ is oriented as $A\rightany B$ by ruling out $A\leftarrow B$. We have knowledge of this orientation because the edge connects nodes from two different tiers, and we refer to this type of edges as \emph{cross-tier edges}:

\begin{definition}[Cross-tier edge]
    Let $\G=(\V,\E)$ be a graph and let $\tau$ be a tiered ordering of the nodes in $\V$. Then, if $\tau (A)<\tau (B)$ the edge $\{A\any B\}\in\E$ is referred to as a cross-tier edge.
\end{definition}

Importantly, tiered orderings not only inform us of possible edge directions in the form of a set of forbidden edges, they also imply certain restrictions on the separations.

\begin{definition}
    Let $\G=(\V,\E)$ be a graph and $\tau$ a tiered ordering of $\V$. Then for some $A\in\V$ we define the past of $A$ in $\V$ relative to $\tau$ as $\mathrm{past}_\V^\tau(A)=\{V\in\V\mid\tau(V)\leq\tau(A)\}$. For some $\mathbf{A}\subseteq\V$, where $\mathbf{A}=\{A_1,\ldots,A_n\}$, we define the past of $\mathbf{A}$ in $\V$ relative to $\tau$ as $\mathrm{past}_\V^\tau(\mathbf{A})=\{V\in\V\mid\tau(V)\leq\max(\tau(A_1),\ldots,\tau(A_n))\}$.
\end{definition}

The next proposition follows from well-known results on the separation properties of DAGs and MAGs, but we state it specifically for the context of tiered background knowledge.

\begin{proposition}\label{prop:past}
    Let $\G=(\V,\E)$ be a DAG or MAG, $\tau$ a tiered ordering of $\V$ consistent with $\G$, and let $A,B\in\V$ be two distinct nodes. Then $A$ and $B$ are separated in $\G$ by some subset of $\V$ if and only if they are separated by a set $\mathbf{S}\subseteq\mathrm{past}_\V^\tau(A)$ or $\mathbf{S}'\subseteq\mathrm{past}_\V^\tau(B)$.
\end{proposition}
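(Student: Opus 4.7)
The plan is to reduce to the standard fact that, in a DAG or MAG, whenever two nodes are separated they can be separated by a subset of their ancestors, and then to observe that ancestors lie in the past because $\tau$ is consistent with $\G$.

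The ``if'' direction is immediate: a separating set $\mathbf{S}\subseteq\mathrm{past}_\V^\tau(A)$ (or $\mathrm{past}_\V^\tau(B)$) is in particular a subset of $\V$, so $A$ and $B$ are separated in $\G$ by a subset of $\V$.

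For the ``only if'' direction, I would invoke the known ancestral separating set property: if $A$ and $B$ are d/m-separated in $\G$ by some $\mathbf{S}\subseteq\V$, then they are also separated by some $\mathbf{S}^{*}\subseteq\an{\G}{\{A,B\}}$. For DAGs this is the classical result behind the ``ancestral'' form of d-separation, and for MAGs it is the corresponding m-separation analogue of \citet{richardson2002ancestral}. Without loss of generality assume $\tau(A)\le\tau(B)$, so that $\mathrm{past}_\V^\tau(A)\subseteq\mathrm{past}_\V^\tau(B)$. Since $\tau$ is consistent with $\G$, any directed edge in $\G$ goes from a lower or equal tier to a higher or equal tier; hence any ancestor $V$ of $A$ satisfies $\tau(V)\le\tau(A)\le\tau(B)$, and any ancestor $V$ of $B$ satisfies $\tau(V)\le\tau(B)$. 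Therefore $\an{\G}{\{A,B\}}\subseteq\mathrm{past}_\V^\tau(B)$, and in particular $\mathbf{S}^{*}\subseteq\mathrm{past}_\V^\tau(B)$, which gives the claim.

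There is no real obstacle beyond carefully citing the right general result in the MAG case: one has to confirm that the ``separation by a subset of ancestors'' statement applies to m-separation exactly as it does to d-separation, and that the notion of ancestor used there (directed paths along $\rightarrow$ edges) is the one for which tier-consistency of $\tau$ with $\G$ guarantees $\tau(V)\le\tau(A)$ when $V\in\an{\G}{A}$. Once that is in hand, the tier comparison $\tau(A)\le\tau(B)$ (or the symmetric case) and the monotonicity of $\mathrm{past}_\V^\tau$ with respect to tier do the rest, and the proof is a couple of lines.
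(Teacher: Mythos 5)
Your proof is correct, and it shares the paper's overall template: reduce to a canonical separating set contained in the ancestors, then use consistency of $\tau$ to conclude $\an{\G}{\{A,B\}}\subseteq\mathrm{past}_\V^\tau(B)$ when $\tau(A)\le\tau(B)$. The difference lies in which canonical set is invoked. The paper treats the two cases separately, using $\pa{\G}{A}$ or $\pa{\G}{B}$ for DAGs and the set $\dsep{\G}{A,B}$ of Lemma 12 in \citet{spirtes1999fci} for MAGs, and only then notes $\dsep{\G}{A,B}\subseteq\an{\G}{\{A,B\}}$. You instead appeal uniformly to the statement that separable (hence non-adjacent) $A,B$ are separated by a subset of $\an{\G}{\{A,B\}}$. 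That statement is true here, but for MAGs the result you want from \citet{richardson2002ancestral} is the pairwise Markov property, whose separator is the \emph{anterior} set $\mathrm{ant}(\{A,B\})\setminus\{A,B\}$; this coincides with the ancestor set only because the paper rules out selection bias (no undirected edges), so that caveat should be made explicit (alternatively, you can get your $\mathbf{S}^{*}$ directly from $\dsep{\G}{A,B}\subseteq\an{\G}{\{A,B\}}$, which is exactly the paper's move). Your version is slightly more uniform across the DAG and MAG cases; the paper's choice of $\dsep{\G}{A,B}$ has the advantage that the same lemma is reused in the subsequent correctness arguments about possible-d-separating sets, so the proposition is anchored on the object the algorithms actually search over. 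One small point of hygiene: write $\mathbf{S}^{*}\subseteq\an{\G}{\{A,B\}}\setminus\{A,B\}$ so the separating set cannot contain $A$ or $B$ themselves.
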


\noindent The proof of Proposition \ref{prop:past} can be found in Appendix \ref{app:proofs}.

Proposition \ref{prop:past} implies that when checking for conditional independence, we can restrict the potential separating sets to those belonging to the (joint) past. For node pairs in later tiers, this may not have much impact, but for early node pairs, this may reduce the potential separating sets substantially.

\section{The tFCI algorithm}\label{sec:tfci}

For a single dataset and allowing for latent variables, the FCI algorithm \cite{spirtes1999fci} constructs a PAG based on conditional independencies. Note that here and in the following sections we assume faithfulness (cf.\ Appendix \ref{app:terminology}). In brief, the FCI starts with a complete undirected graph, and removes edges between pairs of nodes whenever the corresponding random variables are found to be (conditionally) independent. Afterwards, it orients v-structures using the conditional independencies learned in the previous phase, and finally it applies orientation rules R1-R10 from \citet{zhang2008completeness} (see Figures \ref{fig:fcirules} and \ref{fig:zhangrules} in Appendix \ref{app:terminology}) to obtain a maximally informative PAG.

Tiered orderings are relevant to two aspects of the algorithm: For the conditional independence tests and for edge orientations. When restricting the tests performed by exploiting information from the tiered ordering via Proposition \ref{prop:past}, we obtain the \emph{simple} tFCI algorithm. Here, the objective is still to learn a PAG representing the given independence model without restriction by the background knowledge. However, we show (in Proposition \ref{prop:simpletfci} below) that the reduction of conditional independence tests still results in an algorithm that  is sound and complete, i.e.\ that the oracle version outputs the same PAG as the original FCI. This algorithm plays a central role, since the following algorithms build on it.

If we additionally orient arrowheads based on the tiered background knowledge (Algorithm \ref{alg:pk} before applying rules R1-R4 and R8-R10), then we obtain the (full) tFCI algorithm. In principle, the objective here is to recover the maximally informative graph that encodes the given independence model and the tiered background knowledge. However, the full tFCI outputs a PMG that may or may not be maximally informative (i.e. the algorithm is not complete). It does represent a superset of the MAGs in the restricted equivalence class, i.e. the algorithm is sound (cf.\ Proposition \ref{prop:fulltfci} below). We  provide the tFCI as a pseudo-algorithm (Algorithm \ref{alg:tfci}) in Appendix \ref{app:pseudoalg}, where the simple version is given by omitting lines 30-34.

Versions of the tFCI algorithm have previously been implemented and applied \citep{scheines1998, tfci, lee2022causal, petersen2023}. In fact, it is closely related to the SVAR-FCI by \citet{malinsky2018causal}. While these authors probably took for granted that the tFCI is sound, we are not aware of any formal proofs, so that we provide them here. This also leads up to the corresponding properties for the tIOD algorithm below.

\begin{proposition}\label{prop:simpletfci}
The oracle version of the simple tFCI algorithm (Algorithm \ref{alg:tfci} without lines 30-34) is sound and complete.
\end{proposition}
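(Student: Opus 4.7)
The plan is to reduce soundness and completeness of the oracle simple tFCI to the already-established soundness and completeness of the oracle FCI by showing that the two algorithms produce identical output graphs. The argument factors through three checks corresponding to the three stages of FCI: skeleton construction, v-structure (unshielded collider) orientation, and the subsequent orientation rules R1--R10 (with R5--R7 omitted under the no-selection assumption).

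For the skeleton, the simple tFCI differs from FCI only in restricting the candidate separating sets to subsets of $\mathrm{past}_\V^\tau(A)$ and $\mathrm{past}_\V^\tau(B)$ when testing whether to remove an edge $A - B$. By Proposition~\ref{prop:past}, $A$ and $B$ are $d$-separated by some subset of $\V$ if and only if they are $d$-separated by some subset lying in one of these two pasts. Hence the simple tFCI removes an edge exactly when FCI would, and the two skeletons coincide. The separating set recorded by simple tFCI may differ from the one recorded by FCI, but both are genuine $d$-separating sets for the corresponding non-adjacent pair.

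For v-structures, I would use the standard fact that, under faithfulness, for an unshielded triple $A - C - B$ (so $A$ and $B$ are non-adjacent in the skeleton) the node $C$ lies in \emph{every} $d$-separating set of $A$ and $B$ if $C$ is a non-collider on the triple in the true MAG, and in \emph{no} $d$-separating set if $C$ is a collider. Consequently the test ``$C \in \mathrm{Sepset}(A,B)$?'' gives the same answer regardless of which particular valid separating set was recorded. Thus the simple tFCI identifies exactly the same unshielded colliders as the FCI. The same invariance argument handles Zhang's discriminating-path rule R4, which is the only orientation rule that consults the recorded sepsets: the two endpoints of a discriminating path are non-adjacent by definition, so the ``in sepset or not'' decision for the candidate middle node does not depend on which valid sepset is stored. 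The remaining orientation rules R1--R3 and R8--R10 depend only on the current partial mixed graph, so they apply identically in both algorithms.

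Combining the three observations, the oracle simple tFCI and the oracle FCI produce the same PAG. Soundness and completeness of FCI, established by \citet{zhang2008completeness} under the no-selection-bias restriction noted in the remark, then transfer directly to the simple tFCI. The main subtlety to nail down carefully is the v-structure/discriminating-path step: one must state and use the ``every sepset or no sepset'' dichotomy for unshielded triples under faithfulness, because a priori the simple tFCI and the FCI may record different sepsets and one might worry that this perturbs the orientation phase. Once that invariance is in place, the rest of the argument is bookkeeping comparing the two algorithms line by line against Algorithm~\ref{alg:tfci} (with lines 30--34 omitted).
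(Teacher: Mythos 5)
Your overall strategy coincides with the paper's: show that the oracle simple tFCI outputs the same PAG as the oracle FCI and transfer soundness and completeness from \citet{zhang2008completeness}. Your treatment of v-structures and of R4 via the ``every separating set or no separating set'' dichotomy is sound, and is if anything more explicit than the paper's on the R4 point. One small omission there: the simple tFCI orients an unshielded triple $\langle V_i,V_j,V_k\rangle$ as a collider not only when $V_j\notin\mathrm{sepset}(V_i,V_k)$ but also when $\tau(V_j)>\max(\tau(V_i),\tau(V_k))$ (this disjunct is \emph{not} among the omitted lines 30--34). You should note that under a correct $\tau$ and oracle knowledge this extra clause is redundant: such a $V_j$ cannot be an ancestor of $V_i$ or $V_k$, hence is a collider on the triple and is excluded from every separating set by your own dichotomy, so no orientation is made that the FCI would not also make.

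The genuine gap is in the skeleton step. Proposition \ref{prop:past} says that $A$ and $B$ are separable iff they are separated by some subset of $\mathrm{past}^\tau_\V(A)$ or $\mathrm{past}^\tau_\V(B)$, but the algorithm does not search all subsets of the past: it searches subsets of $\adj{\G}{V_i}\cap\mathrm{past}^\tau_\V(V_i)$ in the first stage and of $\possdsepp{\G}{V_i,V_j}\cap\mathrm{past}^\tau_\V(\{V_i,V_j\})$ in the second. From ``some separating set lies in the past'' and ``some separating set lies in $\possdsepp{\G}{A,B}$'' one cannot conclude that a separating set lies in the \emph{intersection}, so your ``hence the two skeletons coincide'' does not follow as written. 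The paper closes exactly this hole with Lemma \ref{lemma:possdsep}: the canonical set $\dsep{\M}{A,B}$ separates $A$ and $B$ whenever any set does (Lemma 12 of \citealp{spirtes1999fci}); it is contained in $\possdsepp{\G}{A,B}$ (Lemma 13 of \citealp{spirtes1999fci} combined with Lemma 1.2 of \citealp{colombo2012learning}); and since $\dsep{\M}{A,B}\subseteq\an{\M}{\{A,B\}}\subseteq\mathrm{past}^\tau_\V(\{A,B\})$ it also lies in the past, hence in the search space actually used. You need this single witness lying in both restrictions simultaneously (or an equivalent one) to make the skeleton argument go through; with it in place the rest of your proof is correct.
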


\begin{proposition}\label{prop:fulltfci}
The oracle version of the tFCI algorithm (Algorithm \ref{alg:tfci}) is sound.
\end{proposition}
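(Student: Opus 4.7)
The plan is to decompose the algorithm into three stages and show that each stage preserves soundness, where by \emph{sound} I mean that every edge mark that is not a circle in the output PMG agrees with the corresponding mark in every MAG of the restricted equivalence class $[\p]_{\K_\tau}$. By Proposition~\ref{prop:simpletfci}, the first stage (lines 1--29 of Algorithm~\ref{alg:tfci}, i.e.\ the simple tFCI) outputs, in the oracle setting, the true PAG $\p$ of the Markov equivalence class, which trivially represents the superset $[\p]\supseteq[\p]_{\K_\tau}$. It therefore suffices to show that the two remaining stages---applying Algorithm~\ref{alg:pk} with the tiered knowledge $\K_\tau$, and then the orientation rules R1--R4 and R8--R10---never turn a circle into a mark that contradicts some MAG in $[\p]_{\K_\tau}$.

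For the background-knowledge stage, I would argue mark by mark. The only edges modified are cross-tier edges $V_i\rightpartany V_j$ where $V_i\rightarrow V_j\in\F_\tau$, i.e.\ $\tau(V_j)<\tau(V_i)$. Fix such an edge and any MAG $\M\in[\p]_{\K_\tau}$. Since $\M$ encodes $\K_\tau$, it cannot contain the edge $V_j\rightarrow V_i$ (equivalently $V_i\leftarrow V_j$), so if there is an edge between $V_i$ and $V_j$ in $\M$ it must have an arrowhead at $V_i$. Because $\M$ belongs to $[\p]$, its mark at $V_i$ on this edge is a refinement of the circle in $\p$. Hence orienting the circle at $V_i$ as an arrowhead agrees with every $\M\in[\p]_{\K_\tau}$, which is exactly what Algorithm~\ref{alg:pk} does. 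Thus the PMG after this stage is still sound with respect to $[\p]_{\K_\tau}$, while no longer necessarily being sound with respect to all of $[\p]$.

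For the rule-application stage, I would appeal to the local soundness of R1--R4 and R8--R10 as established by \citet{zhang2008completeness}: each rule is of the form ``if the surrounding marks are thus-and-so, then this circle must be an arrowhead (or tail) in every MAG consistent with the current graph.'' Since our input to the rules is a PMG whose non-circle marks are shared by every MAG in $[\p]_{\K_\tau}$, whenever a rule's antecedents are satisfied in the PMG they are also satisfied in every $\M\in[\p]_{\K_\tau}$, and therefore the conclusion of the rule must also hold in every such $\M$. A straightforward induction on the number of rule applications then shows that every orientation produced by these rules is valid for every MAG in the restricted equivalence class.

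The main obstacle I anticipate is justifying this last step carefully: Zhang's soundness proofs are typically phrased for a PAG learned by FCI, not for a PMG that already carries extra arrowheads from background knowledge. One therefore needs to verify that each rule's soundness argument uses only (i) the marks currently present in the graph and (ii) the fact that those marks are correct in every MAG in the set being represented---neither of which depends on \emph{how} the marks got there. Making this observation explicit, and emphasising that no rule's soundness relies on the absence of additional arrowheads, is what carries the argument through. Completeness is not claimed, so I would close by noting (via a small example if desired) that the ordering of Algorithm~\ref{alg:pk} relative to R1--R4, R8--R10 can leave some circle marks unresolved, explaining why only soundness is asserted.
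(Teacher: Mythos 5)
Your proposal is correct and follows essentially the same three-stage decomposition as the paper's proof: invoke Proposition~\ref{prop:simpletfci} to establish that the simple-tFCI phase recovers the true PAG, argue that the cross-tier orientations of Algorithm~\ref{alg:pk} are sound because the background knowledge is correct, and then appeal to the soundness of R1--R4 and R8--R10. Two remarks. First, you prove something slightly stronger than the paper does: you show that every non-circle mark agrees with \emph{every} MAG in the restricted class $[\p]_{\K_\tau}$, whereas the paper's proof only verifies that the output contains no arrowhead or tail absent from the single true MAG $\M$. Your version is the more direct justification of the claim that the output ``represents a superset of the restricted equivalence class,'' and your explicit treatment of why Zhang's rule-soundness arguments carry over to a PMG already carrying background-knowledge arrowheads (each rule's argument uses only the correctness of the marks currently present, not their provenance) makes precise a step the paper compresses into ``the rules do not introduce any new information.'' Second, a small direction slip in the background-knowledge stage: for a cross-tier edge with $V_i\rightarrow V_j\in\F_\tau$ (so $\tau(V_j)<\tau(V_i)$), the edge that every $\M$ encoding $\K_\tau$ must lack is $V_i\rightarrow V_j$ itself, not $V_j\rightarrow V_i$ as you wrote; it is ruling out $V_i\rightarrow V_j$ that leaves only $V_i\leftarrow V_j$ or $V_i\leftrightarrow V_j$ and hence forces the arrowhead at $V_i$ that Algorithm~\ref{alg:pk} places. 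Since your stated conclusion (arrowhead at $V_i$) is the correct one, this is a transcription error rather than a gap in the argument.
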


\noindent The proofs of Propositions \ref{prop:simpletfci} and \ref{prop:fulltfci} can be found in Appendix \ref{app:proofs}. The proof strategies are analogous to those in \citet{spirtes1999fci} and \cite{colombo2012learning} with additional use of Proposition \ref{prop:past} and assuming correct  tiered background knowledge.

While the simple tFCI carries out fewer conditional independence tests, in the oracle case its output  
is identical to the one of the FCI and hence is a PAG (we comment on the sample version of the simple tFCI in Section \ref{sec:eff} and the discussion).
However, it is important to note that, in contrast to the simple tFCI, the full tFCI algorithm might not be complete, and the output might not be a maximally informative PAG. This is because with the addition of tiered background knowledge we might need more orientation rules than R1-R4 and R8-R10. Completeness of the FCI under certain types of background knowledge has been shown, e.g., local background knowledge \citep{wang2022sound}, context variables \citep{mooij2020joint}, or a restricted kind of tiered background knowledge \citep{andrews2020}. But for the general case, it is still an open question whether all orientation rules needed to ensure completeness under tiered background knowledge have been found \citep{venkateswaran2024towards}.

\section{The tIOD algorithm}\label{sec:tiod}

Let us first review the basic IOD algorithm \citep{tillman2011learning} without background knowledge. It takes as input $n$ datasets $\V_1,\ldots,\V_n$ of which any pair may share some variables\footnote{The extreme cases are that all datasets share all variables (which can then be tackled with the FCI for a unique dataset) or that the datasets have no variables in common, in which case the output set of PAGs will be very large.}. The underlying assumption of the IOD algorithm is that there is a MAG $\M=(\V,\E)$ on the full set of variables $\V=\V_1\cup\ldots\cup\V_n$
of which the individual datasets represent the marginals corresponding to their respective subset of variables. 
The IOD algorithm  returns a {\em set} of PAGs for which the marginalised independence models onto the nodes in $\V_1,\ldots, \V_n$ are equal to the independence models learned from the individual datasets. Furthermore, this set is guaranteed (in the oracle case) to contain the PAG of $\M$. The IOD algorithm consists of two parts: The first part learns the independence models from the individual datasets, and uses these models to construct a graph $\G$, which contains a superset of the adjacencies of $\M$, and a subset of the v-structures of $\M$. An individual independence model is necessarily marginal over those variables not included in this dataset and, hence, must allow for latent variables.   
The second part of the IOD algorithm constructs a graph for each possible combination of edge removals or v-structure orientations  encoding an independence model that can be marginalised into all of the $n$ independence models learned from data. We illustrate the basic IOD algorithm with the following example.

\begin{figure}[!htbp]
\centering
\includegraphics{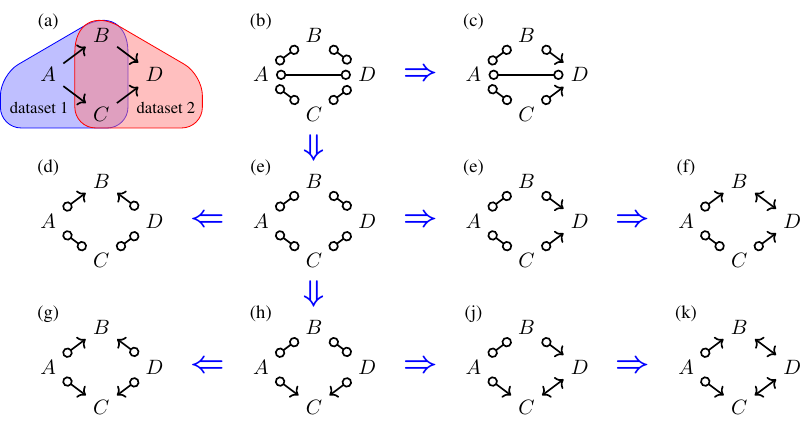}
\caption{Examples of graphs visited by the IOD algorithm. Here, (e) is the PAG of (a), but more graphs may encode the marginal independence models learned from dataset 1 and dataset 2. Given oracle knowledge of (a) as input, the algorithm considers all graphs where combinations of the edges $A\nondir D$, $A\nondir B$, $A\nondir C$, $B\nondir D$ and $C\nondir D$ are removed. Here, we illustrate the removal of $A\nondir D$. In total, the IOD visits 73 graphs, including graphs based on (b), (c),$\ldots$, (k), and it outputs the eight graphs in Figure 3.}
\label{fig:iod}
\end{figure}

\begin{example}[The IOD algorithm]\label{ex:iod}
Figure \ref{fig:iod} (a) shows a MAG over the nodes $\{A, B, C, D\}$. Here, $X_A$, $X_B$ and $X_C$ have been measured in dataset 1, and $X_B$, $X_C$ and $X_D$ in dataset 2. When learning the causal structure of $\{A, B, C, D\}$ given dataset 1 and dataset 2, the IOD algorithm proceeds as follows: First it constructs a PMG over $\{A,B,C\}$ and one over $\{B, C,D\}$, and while doing so it also constructs a common graph $\G$. At first, $\G$ is a complete non-directed graph over $\{A, B, C, D\}$. Then, for every pair of variables that are conditionally independent in dataset 1 or dataset 2, the edge between the corresponding nodes in $\G$ is removed, and we obtain the graph in Figure \ref{fig:iod} (b). At this stage of the IOD algorithm, nodes are adjacent if either they cannot be separated in the marginal graph over $\{A,B,C\}$ or $\{B, C,D\}$, or if they  are never measured jointly in the same dataset. Hence, $\G$ contains a superset of the edges of the true underlying graph. Next, the algorithm considers all edges that might not be contained in the true graph, either because the variables were never measured jointly, or because all variables necessary for obtaining conditional independence were never measured jointly. In this example, all edges are candidates for this, and the algorithm considers each combination of edge removals (32 candidate skeletons in total). Consider the case of the removal of edge $A\nondir D$: The algorithm proceeds with two possible graphs, Figure \ref{fig:iod} (b) and (e). Now, the IOD algorithm considers all possible v-structures. A v-structure might be missing in Figure \ref{fig:iod} (b) and (e) if for every dataset either the collider is not measured, or there is no separating set measured in the dataset. The triple $\langle B, D, C\rangle$ might be a v-structure, since it cannot be tested whether $X_B$ and $X_C$ are conditionally independent given $X_A$ and $X_D$ -- all we know is that they are conditionally independent given $X_A$ alone, but dependent give $X_D$ alone, which at this stage is not sufficient to determine whether $\langle B, D, C\rangle$ is a v-structure. Moreover, if $A$ and $D$ are not adjacent, then $\langle A, B, D\rangle$ or $\langle A, C, D\rangle$ might be v-structures as well: We cannot test this since $X_A$ and $X_D$ are never measured jointly. Given the skeleton in Figure \ref{fig:iod} (b), the only possible additional v-structure is $\langle B, D, C\rangle$ (Figure \ref{fig:iod} (c)), but the skeleton in Figure \ref{fig:iod} (e) allows for all combinations of $\langle B, D, C\rangle$, $\langle A, B, D\rangle$ and $\langle A, C, D\rangle$ (Figure 2 (d) to (k)), so the algorithm proceeds with all these possible graphs. The algorithm further proceeds by orienting edges according to the orientation rules R1--R10, constructing additional graphs when multiple discriminating paths are consistent with the information (see \citet{tillman2011learning} for details). Finally, it verifies that from all constructed graphs, we can construct MAGs that are consistent with the marginal independence models. It then outputs this set of PAGs. In the present example, the IOD algorithm  visits a total of 73 graphs and outputs 8 PAGs. See Table \ref{tab:ex1} in Appendix \ref{sec:example} for an overview, and Figure \ref{fig:tiod} in Example \ref{ex:tiod} for a depiction of the graphical output.
\end{example}

We extend the IOD algorithm to incorporate tiered background knowledge in multiple ways. First, we use it to restrict the conditional independence tests in the first part of the algorithm, analogously to the simple tFCI. Second, we use the tiered background knowledge for additional restrictions involving adjacencies and  v-structures so as to exclude some PAGs that are inconsistent with the background knowledge. With these two steps, the objective is to learn the set of all PAGs, and only those PAGs, that are compatible with the marginal independence models and consistent with the tiered background knowledge.  As we will see, tiered background knowledge reduces the number of potential edge removals, and the algorithm thus visits fewer potential skeletons. Moreover, it also increases the number of identifiable v-structures compared to no background knowledge. We refer to the combination of these two modifications as the {\em simple tIOD} algorithm (Algorithm \ref{alg:tiod} without lines 67-72).

In addition, we may also orient cross-tier edges and further edges as consequences of the former due to rules R1-R4 and R8-R10 (Figures \ref{fig:fcirules} and \ref{fig:zhangrules}). The algorithm using all three modifications is referred to as the (full) tIOD algorithm (Algorithm \ref{alg:tiod}). Here, the desired objective  is to recover all maximally informative graphs that are compatible with the marginal independence models and encode the tiered background knowledge. However, for the same reasons as the tFCI, the algorithm is only sound, and not necessarily complete: The tIOD outputs a set of PMGs that might not represent restricted equivalence classes. However, the algorithm is guaranteed to output at least one PMG that represents a set of MAGs containing the true underlying MAG (Proposition \ref{prop:fulltiod}).

In summary, we have the following modifications and results.

\paragraph{Modifications defining the simple tIOD algorithm (Algorithm \ref{alg:tiod})}

\begin{itemize}
    \item[(i)] When testing conditional independence in the first round, only consider separating sets that belong to the past (lines 3-13).
    \item[(ii)] Only condition on sets of nodes that belong to the past when testing for v-structures, and orient cross-tier edges that constitute v-structures (lines 14-18 and 27-31). 
    \item[(iii)] Only consider possible separating sets that belong to the past when constructing the final skeleton and v-structures (lines 19-26).
    \item[(iv)] When considering removable edges, restrict possible separating sets to the past (lines 33-37).
    \item[(v)] Orient v-structures consisting of cross-tier edges before considering possible v-structures (lines 39-47).
    \item[(vi)] Discard every PAG that is not consistent with the tiered background knowledge (line 65).
\end{itemize}

\paragraph{Additional modifications defining the full tIOD algorithm (Algorithm \ref{alg:tiod})}

\begin{itemize}
    \item[(vi)] For each PAG in the final output, orient any remaining cross-tier edges and apply orientation rules R1-R4 and R8-R10 (lines 67-72).
\end{itemize}

The  simple and full tIOD algorithms have the following properties.

\begin{proposition}\label{prop:simpletiod}
The oracle version of the simple tIOD algorithm (Algorithm \ref{alg:tiod} without lines 67-72) is sound and complete for the set of PAGs consistent with the tiered background knowledge.
\end{proposition}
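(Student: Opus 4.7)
The plan is to establish soundness and completeness by reducing to the corresponding known properties of the original IOD algorithm of \citet{tillman2011learning}, combined with Proposition \ref{prop:past} and the strategy underlying the proof of Proposition \ref{prop:simpletfci}.

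For soundness, we must show that every PAG $\p$ in the output is (a) compatible with the marginal independence models learned from the datasets $\V_1,\ldots,\V_n$, and (b) consistent with $\K_\tau$. Property (b) is immediate from modification (vi), which explicitly discards PAGs that fail to be consistent. Property (a) follows from the soundness of the unrestricted IOD, provided we verify that modifications (i)--(v) do not introduce spurious candidates. By Proposition \ref{prop:past} applied to the true underlying MAG $\M$, every CI that can be witnessed by a general separating set can also be witnessed by a set lying in the past of the tested pair. Hence, the past-restricted tests in (i), (iii) and (iv) recover the same CI model over each $\V_i$ as unrestricted tests would. The pre-orientations in (ii) and (v) only fix the arrowhead and tail of cross-tier edges that have been identified as participating in v-structures; since cross-tier edges must be oriented from the earlier to the later tier in every MAG consistent with $\K_\tau$, these orientations also hold in $\M$.

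For completeness, we must show that every PAG $\p$ that is both compatible with the marginal independence models and consistent with $\K_\tau$ appears in the output. By completeness of the unrestricted IOD, $\p$ would be among the PAGs produced without any of the modifications, and modification (vi) retains $\p$ since it is consistent. It remains to verify that modifications (i)--(v) cannot inadvertently eliminate $\p$. For (i), (iii) and (iv), applying Proposition \ref{prop:past} to any MAG in $[\p]$ shows that past-restricted separating sets suffice to witness every CI relevant to adjacency decisions and v-structure proposals for $\p$, so $\p$ is neither falsely excluded nor falsely refined. For (ii) and (v), because $\p$ is consistent with $\K_\tau$, every cross-tier edge in $\p$ already carries its tail at the earlier-tier endpoint, so pre-orientation is in agreement with $\p$ and does not exclude it.

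The main obstacle is the completeness direction, where one must ensure that the combined effect of pruning CI tests and pre-orienting cross-tier v-structures does not commit the algorithm to structural choices incompatible with some consistent PAG $\p$. The resolution is to mirror, at each stage of the IOD where CI tests or v-structure proposals occur -- in the construction of the combined graph $\G$, the enumeration of candidate edge removals, and the proposal of additional v-structures -- the step-by-step reasoning of Proposition \ref{prop:simpletfci}, maintaining the invariant that every compatible and consistent $\p$ remains a candidate throughout. Since $\K_\tau$ is assumed to be correct, this invariant is preserved at every branching point, and the desired equality between the output set and the set of compatible consistent PAGs follows.
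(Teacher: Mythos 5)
Your overall strategy matches the paper's: reduce to the correctness of the unrestricted IOD, use Proposition \ref{prop:past} to show that past-restricted separating sets lose nothing at each stage, and use correctness of $\K_\tau$ to justify the pre-oriented cross-tier v-structures. The paper executes this via a chain of lemmas (the intermediate graphs, $\mathbf{InducingPaths}$, candidate skeletons and candidate v-structures coincide with, or are contained in, those of the IOD), which is exactly the ``invariant at every branching point'' you describe, so on those steps you are on the right track even though you only sketch them.

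There is, however, one genuine gap: your treatment of the consistency filter. You assert that consistency with $\K_\tau$ is ``immediate from modification (vi)'' and, for completeness, that ``(vi) retains $\p$ since it is consistent.'' But the algorithm never directly tests whether a candidate PAG is consistent with the tiered ordering; what it actually checks (criterion (iv) at line 65) is a separation property of one particular MAG $\M$ constructed from the candidate -- namely that every non-adjacent pair is m-separated by some subset of its joint past. To get both directions of the proposition you must prove that this graphical criterion is satisfied by some $\M\in[\p]$ if and only if $\p$ is consistent with $\tau$; this is the paper's Lemma \ref{lemma: consistent}, and its ``only if'' direction is not trivial: one has to argue that if some non-adjacent pair can only be m-separated by conditioning on future nodes, then every m-connecting path through the future forces an arrowhead into an earlier-tier endpoint in every Markov-equivalent MAG, contradicting $\K_\tau$. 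Without this equivalence, soundness (no inconsistent PAG survives the filter) and completeness (every consistent PAG survives it) are both unestablished at that step of your argument.
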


\begin{proposition}\label{prop:fulltiod}
The oracle version of the full tIOD algorithm (Algorithm \ref{alg:tiod}) is sound.
\end{proposition}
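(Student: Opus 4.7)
The plan is to combine Proposition~\ref{prop:simpletiod} with the soundness arguments used for Proposition~\ref{prop:fulltfci}. Since the full tIOD differs from the simple tIOD only in the final post-processing step (lines 67--72), which independently applies Algorithm~\ref{alg:pk} and then the orientation rules R1--R4 and R8--R10 to each PAG already in the simple tIOD output, soundness reduces to exhibiting at least one output PMG that represents a set of MAGs containing the true underlying MAG $\M$.

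First I would invoke Proposition~\ref{prop:simpletiod}, which guarantees that the simple tIOD output is exactly the set of PAGs compatible with the marginal independence models of the input datasets and consistent with the tiered background knowledge $\K_\tau$. Since $\M$ satisfies both conditions by assumption, there is a PAG $\p^*$ in the simple tIOD output with $\M \in [\p^*]$.

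Second I would trace the post-processing applied to $\p^*$. Because $\M$ encodes $\K_\tau$, every cross-tier edge in $\p^*$ that still carries a circle mark has an orientation in $\M$ agreeing with $\K_\tau$. Therefore Algorithm~\ref{alg:pk} only replaces such circle marks by orientations compatible with $\M$, so the resulting PMG $\p^*_{\K_\tau}$ is encoded by $\M$. Subsequently, the orientation rules R1--R4 and R8--R10 only derive marks that are logical consequences of the ancestral and maximality properties of MAGs together with the (non-)adjacencies and v-structures already present in $\p^*_{\K_\tau}$. As in the proof of Proposition~\ref{prop:fulltfci}, following \citet{zhang2008completeness} and \citet{colombo2012learning}, every such mark must also be present in $\M$, so the final PMG remains encoded by $\M$ and represents a superset of the restricted equivalence class induced by $\K_\tau$.

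The hard part will be justifying that these purely graphical rules can legitimately be applied to a PAG derived from overlapping datasets, where adjacencies and v-structures encode only marginal conditional independence information. The key observation is that completeness of the simple tIOD ensures $\p^*$ is a genuine PAG of an equivalence class containing $\M$; once restricted to this specific $\p^*$, the post-processing coincides with the single-dataset tFCI post-processing and the tFCI soundness argument applies verbatim. Combining the three steps, the full tIOD outputs at least one PMG encoded by $\M$, which is precisely the required soundness statement.
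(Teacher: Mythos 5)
Your proposal is correct and follows essentially the same route as the paper: invoke Proposition~\ref{prop:simpletiod} to guarantee that some graph reaching line 65 is the PAG of the true MAG $\M$, then argue that the cross-tier orientations are sound because the tiered background knowledge is correct, and that R1--R4 and R8--R10 only add marks consistent with $\M$. The only detail you gloss over is rule R4: when a discriminating path is found the algorithm cannot always decide the collider orientation and therefore keeps both branches (line~72), so the precise claim is that \emph{at least one} of the resulting graphs contains no arrowhead or tail absent from $\M$, which the paper states explicitly.
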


\noindent The proofs are given in Appendix \ref{app:proofs}. The proof strategies are analogous to those in \citet{tillman2011learning}, but make use of Proposition \ref{prop:past}, and exploit the  tiered background knowledge which is assumed to be correct.

\section{Efficiency and informativeness}\label{sec:eff}

Tiered background knowledge leads to more informative outputs whenever the tiered ordering implies the orientation of cross-tier edges that are not involved in v-structures; this may even imply additional informativeness due to the orientation rules. Thus, the output contains more information than could be obtained from the independence model. In particular, the tFCI and tIOD algorithms then output a set of graphs that contain more information than the independence models alone. This is certain for the oracle case and can be expected for the finite sample case. 

Without the orientation of cross-tier edges, the advantages might not be as obvious. The simple tFCI algorithm outputs the same PAG as the FCI algorithm under oracle knowledge. The benefit of the simple tFCI is found, instead, in the sample case, where the output is prone to statistical errors. By skipping unnecessary statistical tests, the output is more robust as was found for the tiered PC algorithm, i.e. in the causally sufficient case \citep{bang2024improving}. Analogously, we expect the simple tiered algorithms to be more robust against statistical errors than the original FCI/IOD algorithms.

However, the simple tIOD algorithm has an interesting additional benefit. Even the simple modifications alone, without orienting cross-tier edges,  outputs a set of PAGs that should often be more informative than with the original IOD algorithm in the sense that the set contains fewer PAGs, and, hence, represents fewer possible equivalence classes. This is illustrated in Examples \ref{ex:tiod} and \ref{ex:info}.
The reason that the simple tIOD algorithm might output fewer PAGs than the IOD algorithm is that it can usually reduce the set of possible edge removals and possible v-structure orientations based on the tiered ordering. In this case, the simple tIOD algorithm is more efficient than the IOD algorithm as it needs to visit fewer graphs, cf.\ Example \ref{ex:tiod}. Here, by efficiency we refer to the algorithm visiting fewer graphs. Below we provide a proposition that states under what conditions the simple tIOD algorithm is more efficient than the IOD.

Proposition \ref{prop:informativeness} considers sets $\possdsepp{\G}{A,B}$ and $\possdsepp{\G}{B,A}$ which are sets of nodes in $\G$ containing a set that m-separates $A$ and $B$ if and only if any subset of the nodes in $\G$ m-separates $A$ and $B$ \citep{spirtes1999fci, colombo2012learning}. These sets are used by the algorithms to determine the independence models. See Appendix \ref{app:terminology} for a formal definition.

\begin{proposition}\label{prop:informativeness}
Consider the simple tIOD algorithm over $\V=\V_1\cup\ldots\cup\V_n$ using $\tau$, where $\tau: V\rightarrow \{1,\ldots,k\}$ for $k>1$. Let $\G$ be the graph obtained in line 32. Then the oracle version of the simple tIOD algorithm  visits fewer graphs than the oracle version of the IOD algorithm over $\V=\V_1\cup\ldots\cup\V_n$, if and only if one of the two following conditions holds:

\begin{itemize}
\item[(i)] 
Consider a pair of nodes $A,B$ such that $A\in\adj{\G}{B}$, and for all $i\in\{1,\ldots ,n\}$ we have
$$\{A\}\cup\adj{\G}{A}\cup\possdsepp{\G}{A,B}\cup\possdsepp{\G}{B,A}\not\subset \V_i\,\,\text{ and}$$
$$\{B\}\cup\adj{\G}{B}\cup\possdsepp{\G}{A,B}\cup\possdsepp{\G}{B,A}\not\subset \V_i.$$
For at least one such pair we  require that $\exists \, i$ such that $A,B\in\V_i$ and 
$$\left(\possdsepp{\G}{A, B}\cup\possdsepp{\G}{B,A}\right)\backslash \V_i\subseteq\V\backslash\mathrm{past}^\tau_\V(A,B)\quad\text{and}
$$
$$\adj{\G}{A}\backslash\V_i\subseteq\V\backslash\mathrm{past}_\V^\tau(A)\,\,\text{ or }\,\, \adj{\G}{B}\backslash\V_i\subseteq\V\backslash\mathrm{past}_\V^\tau(B).$$

\item[(ii)] Consider a triple of nodes $\langle A,C,B\rangle$ such that (a) $A,B\in\adj{\G}{C}$, (b) $A\notin\adj{\G}{B}$, and (c) for all $i\in\{1,\ldots n\}$ either $C\notin\V_i$ or $\mathrm{sepset}(\G_i, \{A, B\})$ is undefined. 
For at least one such triple we  require that $\tau(C)>\max(\tau(A),\tau(B))$.

\end{itemize}

\end{proposition}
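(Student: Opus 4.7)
The plan is to prove both directions by analysing the enumeration phase of the two algorithms, i.e.\ the combinatorial branching over possibly removable edges and over possible v-structures. My first step is to observe that, by Proposition \ref{prop:past}, oracle conditional independence tests restricted to past-conditioning sets agree with the unrestricted tests, so the skeleton and the v-structure information derived from data coincide at line 32 of the IOD and the simple tIOD in the oracle case. Consequently, any discrepancy in the number of visited graphs must come from the subsequent branching.

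For the ``if'' direction, I would handle (i) and (ii) separately. Under condition (i), the first pair of non-inclusions states that no dataset $\V_i$ contains all the nodes the IOD would need to rule out removability of $A\text{-}B$, so IOD branches on ``remove'' vs.\ ``keep'' this edge. The last requirement exhibits a dataset $\V_i$ containing both $A$ and $B$ such that every relevant node missing from $\V_i$ lies outside $\mathrm{past}_\V^\tau(A,B)$ or outside $\mathrm{past}_\V^\tau(A)$ (resp.\ $\mathrm{past}_\V^\tau(B)$); this is precisely the situation in which the past-restricted test used by the simple tIOD (lines 3--13 and 33--37) succeeds within $\V_i$ and therefore does not mark the edge as removable. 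One binary branch is thus eliminated, strictly reducing the number of candidate skeletons and hence of visited graphs. Under condition (ii), the triple $\langle A,C,B\rangle$ satisfies the IOD criteria for being a possible v-structure, so IOD branches over whether it is a collider, whereas $\tau(C)>\max(\tau(A),\tau(B))$ forces arrowheads at $C$ on both incident edges in the simple tIOD before the branching step (lines 39--47), so the simple tIOD fixes the orientation rather than branching over it, again visiting strictly fewer graphs.

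For the ``only if'' direction, assume neither (i) nor (ii) holds. For every candidate edge, failure of (i) means that either some dataset already contains the nodes IOD needs (so IOD does not mark the edge removable either), or no dataset witnesses the past-restriction (so the simple tIOD must also mark the edge removable); in both cases the two algorithms agree on removability. Similarly, failure of (ii) on every triple meeting its structural preconditions means that the tiered ordering never forces a collider orientation that IOD would otherwise have to enumerate, so the two algorithms branch identically on v-structures. Since the initial $\G$ is common to both and the downstream orientation rules R1--R4 and R8--R10 depend only on the current graph, the enumerations are in bijection and the two algorithms visit the same number of graphs. The main obstacle is the bookkeeping required to match the set-theoretic conditions in (i) and (ii) exactly to the pseudo-algorithmic definitions of ``possibly removable'' and ``possible v-structure''; this hinges on a careful reading of Algorithm \ref{alg:tiod}, but becomes mechanical once one establishes the correspondence between the union $\{A\}\cup\adj{\G}{A}\cup\possdsepp{\G}{A,B}\cup\possdsepp{\G}{B,A}$ and the set of candidate separating nodes actually considered by the algorithm.
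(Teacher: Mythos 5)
Your proposal is correct and follows essentially the same route as the paper's proof: both reduce the comparison to the sizes of $\mathbf{RemoveEdge}$ versus $\mathbf{RemoveEdge}^\tau$ and $\mathbf{OrientVstructure}$ versus $\mathbf{OrientVstructure}^\tau$ after noting (via Lemma \ref{lemma:same}/Proposition \ref{prop:past}) that the two algorithms agree on the graph $\G$ at line 32, and then match condition (i) to an edge excluded from $\mathbf{RemoveEdge}^\tau$ and condition (ii) to a triple oriented rather than branched over. The only difference is that the paper spells out the set-algebra converting the $\backslash\V_i$ conditions into the containments tested at lines 33--37, which you correctly identify as the remaining mechanical step.
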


\noindent The proof of Proposition \ref{prop:informativeness} can be found in Appendix \ref{app:proofs}. The proof is based on the fact that the number of visited graphs depends on the number of potential edge removals and potential v-structures. These numbers decrease in the case where tiered background knowledge renders either the potential edge removals impossible, or the v-structures certain. This occurs when for some node pairs all possible conditioning sets that have not been measured in the same dataset lie in the future (condition (i)), or if some v-structures are implied by the tiered ordering (condition (ii)).

\begin{figure}[!htbp]
\centering
\includegraphics{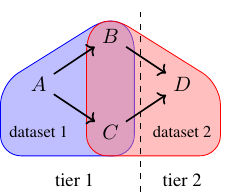}
\includegraphics{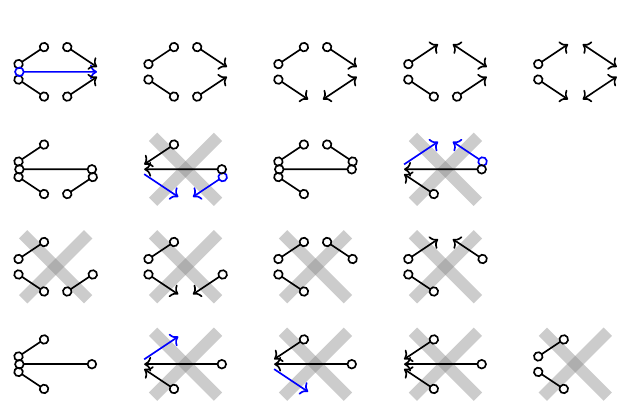}
\caption{Left: A MAG $\M$ with tiered ordering $\tau$, where the variables are measured in two datasets, dataset 1 and dataset 2. Right: All graphs visited by the tIOD algorithm (Algorithm \ref{alg:tiod}). Black edges are obtained up to line 50, blue edges are obtained from orientation rules R1-R4 and R8-R10. Crossed out graphs do not satisfy the criteria of line 65; all other graphs are output by the tIOD algorithm. In this example, the IOD algorithm visits more graphs but still outputs the same PAGs as the simple tIOD.}
\label{fig:tiod}
\end{figure}

\begin{remark}
    Let $\mathbfcal{P}$ be a set of PAGs over $\V$ obtained by the oracle version of the IOD algorithm, and $\mathbfcal{P}_\tau$ a set of PAGs obtained by the oracle version of the simple tIOD algorithm. It then follows from Proposition \ref{prop:informativeness}  that $\mathbfcal{P}_\tau$ is a subset of $\mathbfcal{P}$ only if at least  one  of the  conditions holds.
\end{remark}

Note that the output of the tFCI always has the same skeleton and at most as many circle marks as the output of the FCI. Hence, it is always  at least as informative as the FCI. Moreover, the simple tIOD also always outputs a (weak) subset of the output of IOD, thus it is  at least as informative.

\begin{remark}
Since the simple and full tIOD are identical up until line 66, the full tIOD will consider the same potential skeletons and v-structures as the simple tIOD.
\end{remark}

\begin{example}\label{ex:tiod}
Consider the MAG $\M$ of Figure \ref{fig:tiod} (a), which is the same as in Example \ref{ex:iod} but  with a tiered ordering, such that nodes $A,B$ and $C$ belong to tier 1, and $D$ belongs to tier 2. Then Proposition \ref{prop:past} implies that the only candidate separating set for $A$ and $B$ is $C$ and the only candidate separating set for $A$ and $C$ is $B$. Since all of the corresponding variables are measured in the same dataset, we see that these are in fact not separating sets, and it is not necessary to consider skeletons without $A\nondir B$ and $A\nondir C$ in the tIOD algorithm. The tIOD algorithm considers $A\nondir D$, $B\nondir D$, and $C\nondir D$ as removable edges and considers 8 different skeletons (see Table \ref{tab:ex2} in Appendix \ref{sec:example}), which is considerably fewer than visited by the IOD algorithm in Example \ref{ex:iod}. 

In dataset 1 we find that $X_B\indep X_C\mid X_A$. In dataset 2 we find no independencies, in particular $X_B\not\indep X_C\mid X_D$, and in all graphs with both edges $B\nondir D$ and $C\nondir D$ present, $\langle B, D, C\rangle$ is considered a potential v-structure by the IOD algorithm in Example \ref{ex:iod}, while the tIOD algorithm orients $\langle B, D, C\rangle$ as a v-structure because $D$ is in a later tier than $B$ and $C$. In total, the simple tIOD algorithm visits 18 graphs (see Table \ref{tab:ex2} in Appendix \ref{sec:example}) and outputs 8 graphs. 
\end{example}

\begin{example}\label{ex:info}
    Example \ref{ex:tiod} is a case where the tIOD visits fewer graphs but outputs  the same PAGs as the IOD.
    A key reason for this is that both algorithms require the output to be consistent with the learned (marginal) independence models. The only independence found was $X_B\indep X_C\mid X_A$. This implicitly requires the graph to have a path between $B$ and $C$, that can be separated by $A$ without also conditioning on $D$ and requires $\langle B, D, C\rangle$ to be a v-structure whenever it is an unshielded triple, which much reduces the number of output graphs. 
    
    A different situation is illustrated in Figure \ref{fig:info}, where nodes $A,B,D$ are measured in dataset 1, while $A,C,D$ are measured in dataset 2, with the same true MAG as before.    
    No independencies are found in dataset 1 and dataset 2. This leaves a complete graph as the intermediate graph $\G$, Figure \ref{fig:info} (b). Hence, the IOD algorithm  outputs, among others, Figure \ref{fig:info} (c). However,  this graph cannot be output by the tIOD algorithm since it  orients $\langle B, D, C\rangle$ as a v-structure in every graph where this is an unshielded triple. In this case, the tIOD algorithm is not just more efficient, it  also outputs fewer graphs and is thus  more informative. 
\end{example}

\begin{figure}
\centering
\includegraphics{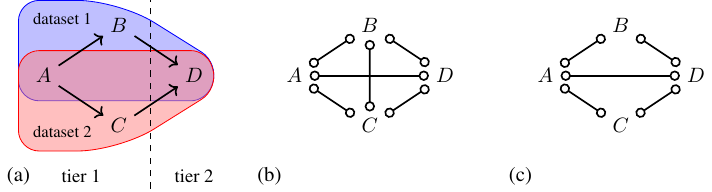}
\caption{(a) MAG with tiered ordering of the nodes, with measurements in dataset 1 and dataset 2. (b) The intermediate graph $\G$ obtained at line 32 of the tIOD algorithm (Algorithm \ref{alg:tiod}) with oracle knowledge of (a) as input. (c) An example graph output by the IOD algorithm, with (a) as input), that would not have been output by the simple tIOD.}
\label{fig:info}
\end{figure}

\section{Discussion}\label{sec:discussion}

We have introduced extensions of constraint-based causal discovery algorithms that exploit tiered background knowledge
while allowing for latent variables and multiple overlapping datasets. The simple versions of tFCI and tIOD can be seen as estimating (sets of) equivalence classes whereas the full versions make further use of the background knowledge at the expense of completeness. We have shown that the simple versions of tFCI and tIOD are sound and complete, and we have shown when exactly using the background knowledge reduces the number of graphs visited by the simple tIOD. In consequence, the simple (not just the full) tIOD outputs a set of graphs that can often be expected to be considerably more informative than without the tiered ordering. 

The results in this paper apply to the oracle case. In practice, however, conditional independence testing is subject to statistical errors. Here, the simple tFCI and tIOD are attractive as they omit unnecessary conditional independence tests by exploiting Proposition \ref{prop:past}. Indeed, for analogous reasons, tiered background knowledge has been shown to improve the statistical properties of the tiered PC algorithm \citep{bang2024improving} -- similarly robust finite sample behavior can be expected for the  sample versions of the tFCI and tIOD algorithms. A further reason for
exploiting tiered background knowledge to detect v-structures, as done by the simple tFCI and simple tIOD, is to ensure that the estimated equivalence classes are consistent with that background knowledge which is not otherwise guaranteed for finite samples. 

We have explicitly assumed one single underlying joint MAG over all variables. This means that the different datasets must inform us about comparable settings. Hence,  the marginal equivalence classes are consistent on the overlapping nodes under oracle knowledge, so that it makes sense to combine them. However, in finite samples we may encounter the case that the  independence models, if estimated separately with the different datasets, are not consistent on the overlapping variables even under this assumption. This can easily be fixed by combining the statistical tests and / or the datasets on the overlapping variables before inputting the estimated independence models into the algorithm; one such solution is given in \citep{tillman2011learning}. 

A number of further extensions could be interesting for future work. For instance, the principles of the tFCI algorithm could be applied to other versions of the FCI algorithm such as the anytime FCI \citep{spirtes2001anytime}, the RFCI \citep{colombo2012learning}, the FCI+ \citep{claassen2013learning}, the order independent or conservative FCI \citep{colombo2014}, etc. Future research might explore how the ideas of these algorithms could also be utilised in the tIOD algorithm.

Often, tiered background knowledge is not the only kind of background knowledge available. Typical  epidemiological applications, in fact, often rely on fully specified  expert causal graphs rather than data driven graph construction \citep{petersen2023constructing}. More realistically, even experts may only have partial knowledge of the causal structure \citep{didelez2024invited}.  
It would be an interesting direction for future research to explore how one could use the tIOD algorithm to combine or refine graphs based on a mix of (possibly multiple) experts  and (possibly multiple) data sources.

Our work is motivated by multi-cohort studies, which may or may not contain repeated measures, but always contain a tiered structure. Temporal structure is also available with time series data. Extensions of the FCI algorithm to time series exist \citep{malinsky2018causal, gerhardus2020high}. These are based on similar principles,  essentially  exploiting corresponding versions of Proposition \ref{prop:past}, i.e. not conditioning on the future. Additional common assumptions such as stationarity and considerations of autocorrelation are crucial for time series which we did not cover, here. While we are not aware that multiple overlapping datasets have been considered in the literature on causal analysis of time series, it is conceivable that a subvector of the multivariate time series is measured in one dataset while another subvector is measured in another dataset. It may be promising to develop a version of the tIOD algorithm for such settings.

\acks{This work was funded by the Deutsche Forschungsgemeinschaft (DFG, German Research Foundation)
– Project numbers 281474342/GRK2224/2 and 459360854. We would like to thank the reviewers for their constructive comments, which helped improve the paper.}

\bibliography{ref}

\appendix

\section{Terminology}\label{app:terminology}

\subsection{Ancestral graphs}

\paragraph{Graphs, nodes and edges} A graph $\G=(\V,\E)$ consists of a set of \textit{nodes} $\V$ and \textit{edges} $\E$. We allow edges to have three types of \textit{edge marks}: $\circ$ (\textit{circle mark}), $-$ (\textit{tail}) and $>$ (\textit{arrowhead}), and we allow the following types of edges: $\nondir$ (\textit{non-directed}), $\rightpart$ (\textit{partially directed}), $\rightarrow$ (\textit{directed}), and $\leftrightarrow$ (bidirected).  We use $\any$ (both edge marks can be of any type), $\rightany$ (left edge mark can be of any type) and $\leftpartany$ (left edge mark can a circle mark or arrowhead) as placeholders.  We say that an edge (or tail) $A\rightarrow B$ is out of $A$ and an edge (or arrowhead)  $A\rightany B$ is into $B$.  Let $A, B\in\V$ and $\{A\any B\}\in\E$, then $A$ and $B$ are \emph{adjacent} in $\G$. If all pairs of nodes in a graph are adjacent, then the graph is \emph{complete}. By $\adj{\G}{A}$ we denote all the nodes that are adjacent to $A\in\V$ in $\G$. We allow for at most one edge between any pair of nodes, and no node can be adjacent to itself. A graph that only contains directed/bidirected/non-directed edges is a directed/bidirected/non-directed graph.

\paragraph{Graphs and subgraphs} Let $\G=(\V,\E)$ and $\G'=(\V',\E')$ be distinct graphs. If $\V'\subseteq\V$ and $\E'\subseteq\E$ then $\G'$ is a \emph{subgraph} of $\G$. The \emph{non-directed subgraph} $\G_U=(\V,\E_U)$ of $\G$ is obtained by removing all edges in $\E$ that contain an arrowhead, such that $\E_U\subseteq\E$. The \emph{induced subgraph} of $\G$ over $\V'\subseteq\V$ is defined as $\G_{\V'}=(\V',\E_{\V'})$ where $\E_{\V'}\subseteq\E$ contains all edges between nodes in $\V'$. By $\G_{\E'}=(\V,\E')$ we denote the subgraph of $\G$ obtained by removing all edges not in $\E'\subseteq$. The skeleton of $\G$ is obtained by replacing every $\{A\any B\}\in\E$ with $\{A\nondir B\}$.

\paragraph{Paths and cycles} Let $\pi=\langle V_1,\ldots ,V_n\rangle$ be a sequence of adjacent nodes. If each node only occurs once in $\pi$, then this is as a path from $V_1$ to $V_n$. If for any $\langle V_{i-1}, V_i, V_{i+1}\rangle$, $1<i<n$, $V_{i-1}$ and $V_{i+1}$ are not adjacent, then this is an unshielded triple, and if every triple on $p$ is unshielded, then $\pi$ is an unshielded path. Let $\pi=\langle V_1,\ldots V_n\rangle$ be a path. If  $V_{i-1}\rightany V_i\leftany V_{i+1}$ occurs on $\pi$, then $V_i$ is a collider on $\pi$ and otherwise it is a non-collider on $\pi$. If in addition $\langle V_{i-1}, V_i, V_{i+1}\rangle$ is an unshielded triple, then this is a \emph{v-structure}. If on $\pi$ $V_i\rightarrow V_{i+1}$ for every $1\leq i <n$, then $\pi$ is a directed path (from $V_1$ to $V_n$). If on $\pi$ $V_i\nondir V_{i+1}$ for every $1\leq i <n$, then $\pi$ is a non-directed path (from $V_1$ to $V_n$). A path $\langle V_1,\ldots V_n\rangle$ is possibly directed from $V_1$ to $V_n$ if for every $1\leq i<n$, the edge between $V_i$ and $V_{i+1}$ is not into $V_i$ or out of $V_{i+1}$. When needed, we  denote a possibly directed path from $V_1$ to $V_n$ by $V_1\dashrightarrow V_n$. A (non-) directed path from $V_1$ to $V_n$ combined with a (non-) directed path from $V_n$ to $V_1$ is a (non-) directed cycle. A directed path from $V_1$ to $V_n$ combined with $V_1\leftrightarrow V_n$ is an almost directed cycle. Let $\G'=(\V,\E')$ be a graph with same skeleton as $\G=(\V,\E)$ but possibly $\E'\neq\E$, then for a path $\pi$ in $\G$ the corresponding path in $\G'$  is the path $\pi'$ that consists of the same nodes as $\pi$. A path $\pi=\langle V_1,\ldots,V_n\rangle$ is an \emph{inducing path} if every node $V_i$, $1<i<n$, is a collider on $\pi$ and an ancestor of $V_1$ or $V_n$.

\paragraph{Parents, ancestors and descendants} For $\G=(\E,\V)$ we define the \emph{parents}, \emph{ancestors}, \emph{possible ancestors}, \emph{descendants} and \emph{possible descendants} of a node $A$ in $\G$ as follows
\begin{align*}
    \pa{\G}{A}&=\{V\in\V\mid \{V\rightarrow A\}\in\E\}\\
    \an{\G}{A}&=\{V\in\V\mid \G\text{ contains a directed path from }V\text{ to } A\}\\
    \possan{\G}{A}&=\{V\in\V\mid \G\text{ contains a possibly directed path from }V\text{ to } A\}\\
    \de{\G}{A}&=\{V\in\V\mid \G\text{ contains a directed path from }A\text{ to } V\}\\
    \possde{\G}{A}&=\{V\in\V\mid \G\text{ contains a possibly directed path from }A\text{ to } V\}
\end{align*}

\noindent For a set $\mathbf{A}$ we define $\an{\G}{\mathbf{A}}=\{V\in\V\mid \G\text{ contains a directed path from }V\text{ to some } A\in\mathbf{A}\}$.

\paragraph{Graph types}  A directed graph that does not contain any directed cycles is called a directed acyclic graph (DAG). A mixed graph (MG) contains both directed and bidirected edges. A graph that contains no directed cycles or almost directed cycles is ancestral. An ancestral mixed graph that does not contain an inducing path between any pair of non-adjacent nodes is a maximal ancestral graph (MAG). A partial mixed graph (PMG) contains directed, bidirected, non-directed and partially directed edges.

\subsection{Independence models and Markov equivalence}

\begin{definition}[m-connecting]
    Let $\G=(\V,\E)$ be a MAG, $\pi=\langle V_1,\ldots ,V_n\rangle$ be a path in $\G$, and let $\mathbf{S}\subseteq\V\backslash\{V_1,V_n\}$. If 
    
    \begin{itemize}
        \item[(i)] for every collider $V$ on $\pi$, either $V$ or a descendant of $V$ is in $\mathbf{S}$, and
        \item[(ii)] no non-collider on $\pi$ is in $\mathbf{S}$
    \end{itemize}

\noindent  then $\pi$ is m-connecting given $\mathbf{S}$.
\end{definition}

For a MAG  $\G=(\V,\E)$, if there exists a path from a $A\in\V$ to $B\in\V$ where $B\neq A$ that is m-connecting given $\mathbf{S}$, we say that $A$ and $B$ are m-connected given $\mathbf{S}$. If no such path exists, we say that $A$ and $B$ are m-separated given $\mathbf{S}$ in $\G$ and denote this by $A\perp_\G B\mid\mathbf{S}$. We define an independence model $\mathcal{I}(\G)$ induced by $\G$ as the collection of all m-separations in $\G$: 
\begin{align*}
    (A\perp_\G B\mid\mathbf{S})\in\mathcal{I}(\G)\Leftrightarrow A\text{ and }B\text{ are m-separated by }\mathbf{S}\text{ in }\G 
\end{align*}

The same definitions are valid for DAGs, and here m-separation (m-connection) is equivalent to d-separation (d-connection). Occasionally, we omit the ``m'' or ``d'' and simply refer to separation in a graph.

\begin{definition}[$\mathrm{dsep}_\G(A,B)$ \citep{spirtes1999fci}]\label{def:dsep}
    Let $\G=(\V,\E)$ be a MAG and let $A,B,V$ be distinct nodes in $\V$. Then $V\in\mathrm{dsep}_\G(A,B)$ if and only if there is a path $\langle A=V_0,V_1,\ldots,V_{n+1}=V\rangle$ for which 
    \begin{itemize}
        \item[(i)] $V_i\in\an{\G}{\{A,B\}}$  for every $i\in\{1,\ldots n\}$, and
        \item[(ii)] $V_i\leftrightarrow V_{i+1}$ for every $i\in\{1,\ldots n-1\}$.
    \end{itemize}
\end{definition}

\begin{lemma}[Lemma 12 in \citet{spirtes1999fci}]\label{lemma:dsep}
    Let $\G=(\V,\E)$ be a MAG and $X,Y\in\V$. There exists a set $\V'\subseteq\V\backslash\{ A,B\}$ such that $A$ and $B$ are m-separated by $\V'$ if and only if they are m-separated by $\mathrm{dsep}_\G(A,B)$.
 \end{lemma}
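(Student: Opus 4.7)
The plan is to split the biconditional and handle the two directions separately. The $(\Leftarrow)$ direction is immediate: since $\dsep{\G}{A,B}$ is itself a subset of $\V\setminus\{A,B\}$, if it m-separates $A$ and $B$ then it already witnesses the existential claim. Only the $(\Rightarrow)$ direction requires work.

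For $(\Rightarrow)$, I would argue by contradiction and exploit the maximality of the MAG. First, the existence of any m-separator for $A$ and $B$ forces them to be non-adjacent in $\G$, since adjacent nodes in a MAG cannot be m-separated; by maximality, it then follows that there is no inducing path between $A$ and $B$ in $\G$. The strategy is to suppose $\dsep{\G}{A,B}$ fails to m-separate $A$ and $B$, and to produce from this assumption an inducing path, yielding the desired contradiction.

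Concretely, I would take a shortest path $\pi=\langle A=U_0,U_1,\ldots,U_{k+1}=B\rangle$ that m-connects $A$ and $B$ given $\dsep{\G}{A,B}$, and establish two claims: (i) every collider $U_i$ on $\pi$ lies in $\an{\G}{\{A,B\}}$; (ii) every intermediate node $U_i$ on $\pi$ is a collider. Claim (i) is direct from the definition of m-connection and Definition \ref{def:dsep}(i): each collider on $\pi$ has a descendant in $\dsep{\G}{A,B}\cup\{A,B\}$, and every node in $\dsep{\G}{A,B}$ is already an ancestor of $\{A,B\}$, so $U_i$ itself must be in $\an{\G}{\{A,B\}}$.

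Claim (ii) is where I expect the main obstacle. A non-collider $U_i$ on $\pi$ cannot itself belong to $\dsep{\G}{A,B}$, since otherwise $\pi$ would not be m-connecting given $\dsep{\G}{A,B}$. To reach a contradiction, I would splice the subpaths of $\pi$ adjacent to $U_i$ together with directed ancestor paths of the colliders on them (available by (i)) in order to build a path from $U_i$ to $A$ or $B$ whose intermediate vertices are ancestors of $\{A,B\}$ and are joined by bidirected edges, i.e. a path satisfying both conditions of Definition \ref{def:dsep}. This would force $U_i\in\dsep{\G}{A,B}$, contradicting the m-connecting property of $\pi$. The delicate part is the bookkeeping: matching the bidirected-edge requirement of Definition \ref{def:dsep}(ii) against the actual edge marks appearing on $\pi$, and invoking the minimality of $\pi$ to rule out short-cut paths that would otherwise frustrate the splicing. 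Once (i) and (ii) are in hand, $\pi$ is an inducing path between the non-adjacent pair $A,B$, contradicting maximality of $\G$; hence $\dsep{\G}{A,B}$ m-separates $A$ and $B$, completing the proof.
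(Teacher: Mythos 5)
First, a point of reference: the paper does not prove this lemma at all --- it is imported verbatim as Lemma 12 of \citet{spirtes1999fci} --- so there is no in-paper proof to compare against, and your sketch has to be judged on its own terms. Your overall strategy is the standard one and is sound in outline: the backward direction is trivial, the existence of any m-separator forces $A$ and $B$ to be non-adjacent, and it then suffices to show that a path m-connecting $A$ and $B$ given $\dsep{\G}{A,B}$ would be an inducing path, contradicting maximality. Your claim (i) is correct exactly as you argue it.

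The gap is in claim (ii), which you explicitly leave as a plan, and the plan as stated is aimed slightly wrong. Definition \ref{def:dsep} requires the witnessing path to start at $A$, so producing ``a path from $U_i$ to $A$ \emph{or} $B$'' would not establish $U_i\in\dsep{\G}{A,B}$. The repair is to let $U_{j+1}$ be the non-collider on $\pi$ \emph{closest to} $A$ and to observe that the initial segment $\langle A,U_1,\ldots,U_j,U_{j+1}\rangle$ of $\pi$ is already a witness: its intermediate vertices $U_1,\ldots,U_j$ are colliders on $\pi$, hence ancestors of $\{A,B\}$ by claim (i), and any two consecutive ones receive arrowheads from the edge between them on both sides, so that edge is bidirected, which is exactly condition (ii) of Definition \ref{def:dsep}. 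No splicing with directed ancestor paths is needed, and indeed the splicing you describe would produce a walk whose internal edges need not be bidirected, so it would not verify the definition as stated; nor is minimality of $\pi$ required anywhere. With this repair, $U_{j+1}\in\dsep{\G}{A,B}$ contradicts the fact that a non-collider on an m-connecting path cannot lie in the conditioning set; hence every intermediate vertex of $\pi$ is a collider and an ancestor of an endpoint, $\pi$ is an inducing path between non-adjacent vertices, and maximality yields the contradiction. In short: right skeleton, but the decisive step is missing and the route you propose for it needs to be redirected as above before this counts as a proof.
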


Two MAGs (or DAGs) $\G_1$ and $\G_2$ are Markov equivalent if they induce the same independence model: $\mathcal{I}(\G_1)=\mathcal{I}(\G_2)$. A (Markov) equivalence class is a set of Markov equivalent MAGs (or DAGs). A partial ancestral graph (PAG) represents an equivalence class of MAGs. A PAG $\p$ represents an equivalence class of MAGs $[\p]$ such that every arrowhead and every tail present in $\p$ is also present in every $\mathcal{\M}\in[\p]$, and we assume that they are maximally informative in the sense that for every circle mark, there is at least one $\mathcal{\M}_1\in[\p]$ where this is an arrowhead, and one $\mathcal{\M}_2\in[\p]$ where this is a tail.

In addition to Markov equivalence, a class of graphs might share more information, and constitute a restricted equivalence class. In general, we can represent the additional background knowledge using a PMG, which may not be ancestral and maximally informative. Given a sufficient set of orientation rules we may represent a restricted equivalence class using a maximally informative PAG. The orientation rules needed depends on the type of background knowledge.

\subsection{The FCI algorithm}

We assume that the nodes $\V$ in a graph $\G=(\V,\E)$ represent a set of random variables $\mathbf{X}_\V$. Throughout, we assume that the \emph{global Markov property} and \emph{faithfulness} hold, which allows us to learn an equivalence class of causal graphs from data. Let $\D=(\V,\E)$ be a DAG and let $P$ be a probability distribution over $\mathbf{X}_\mathbf{V}$. If $P$ obeys the global Markov property and faithfulness with respect to $\D$ then for any distinct nodes $A,B\in\V$ and set $\mathbf{S}\subseteq\V\backslash\{ A,B\}$
    
    \begin{align}
        A\perp_{\D} B\mid \mathbf{S}\Leftrightarrow X_{A}\indep X_{B}\mid \mathbf{X}_\mathbf{S}
    \end{align}

    \noindent where $\indep$ denotes (conditional) independence between random variables.

We can still utilise the global Markov property and faithfulness without causal sufficiency, since d-separation in a DAG corresponds to m-separation in a MAG that has been constructed by marginalising over the latent variables (see \citet{richardson2002ancestral} for details).

The FCI algorithm searches for possible separating sets in the following set:

\begin{definition}[$\possdsep{\G}{A,B}$ \citep{spirtes1999fci}]
    Let $\M=(\V,\E)$ be a PMG and $A$ and $B$ distinct nodes in $\V$. Then $V\in\V$ is in $\possdsep{\G}{A,B}$ if and only if  there is a path $\pi=\langle A=V_1, \ldots , V_n = V\rangle$ between $V$ and $A$ that in $\G$ such that $\pi$ is  not directed, and for every subpath $\langle V_{j-1}, V_j, V_{j+1}\rangle$, $2\leq j\leq n-1$, of $\pi$, either 
        \begin{itemize}
            \item[(a)] $V_j$ is a collider, or
            \item[(b)] $V_j$ is a definite non-collider and $V_{j-1}$ and $V_{j+1}$ are adjacent.
        \end{itemize}
\end{definition}

\noindent This set was refined by \citet{colombo2012learning} in the following way: $$\possdsepp{\G}{A,B}=\{V\in\possdsep{\G}{A,B}\mid V\text{ is on a path between $A$ and $B$ in }\G\}$$

\begin{definition}[Discriminating path \citep{zhang2008completeness}]
    Let $\M=(\V,\E)$ be a MAG, and let $\pi=\langle A,\ldots , B, C, D\rangle$ be a path between $A\in\V$ and $D\in\V$ in $\M$. Then $\pi$ is a discriminating path for $C$ if

    \begin{itemize}
        \item[(i)] $\pi$ consists of at least three edges, and
        \item[(ii)] $C$ is adjacent to $D$ in $\pi$ and is a non-endpoint node on $\pi$, and
        \item[(iii)] $A\notin\adj{\M}{D}$, and every node between $A$ and $C$ on $\pi$ is a collider and a parent of $D$.
    \end{itemize}
\end{definition}

\begin{figure}[!htbp]
\centering
\begin{tikzpicture}[scale = 0.6]

\node (A1) at (0,2) {$A$};
\node (B1) at (0,0) {$B$};
\node (C1) at (2,0) {$C$};

\node (A1') at (4,2) {$A$};
\node (B1') at (4,0) {$B$};
\node (C1') at (6,0) {$C$};

\node (r1) at (3, 1) {$\overset{\textnormal{R1}}{\Longrightarrow}$};

\node (A2) at (0,-2) {$A$};
\node (B2) at (0,-4) {$B$};
\node (C2) at (2,-4) {$C$};

\node (A2') at (4,-2) {$A$};
\node (B2') at (4,-4) {$B$};
\node (C2') at (6,-4) {$C$};

\node (r2) at (3, -3) {$\overset{\textnormal{R2}}{\Longrightarrow}$};

\node (A2'') at (8,-2) {$A$};
\node (B2'') at (8,-4) {$B$};
\node (C2'') at (10,-4) {$C$};

\node (A2''') at (12,-2) {$A$};
\node (B2''') at (12,-4) {$B$};
\node (C2''') at (14,-4) {$C$};

\node (r2') at (11, -3) {$\overset{\textnormal{R2}}{\Longrightarrow}$};

\node (A3) at (0,-6) {$A$};
\node (B3) at (0,-8) {$B$};
\node (C3) at (2,-8) {$C$};
\node (D3) at (2, -6) {$D$};

\node (A3') at (4,-6) {$A$};
\node (B3') at (4,-8) {$B$};
\node (C3') at (6,-8) {$C$};
\node (D3') at (6, -6) {$D$};

\node (r3) at (3, -7) {$\overset{\textnormal{R3}}{\Longrightarrow}$};

\node (A4) at (0, -12) {$A$};
\node (V1) at (2, -12) {$\cdots$};
\node (V2) at (4, -12) {$\cdots$};
\node (B4) at (6, -12) {$B$};
\node (C4) at (8, -12) {$C$};
\node (D4) at (8, -10) {$D$};

\node (A4') at (10, -12) {$A$};
\node (V1') at (12, -12) {$\cdots$};
\node (V2') at (14, -12) {$\cdots$};
\node (B4') at (16, -12) {$B$};
\node (C4') at (18, -12) {$C$};
\node (D4') at (18, -10) {$D$};

\node (A4'') at (0, -16) {$A$};
\node (V1'') at (2, -16) {$\cdots$};
\node (V2'') at (4, -16) {$\cdots$};
\node (B4'') at (6, -16) {$B$};
\node (C4'') at (8, -16) {$C$};
\node (D4'') at (8, -14) {$D$};

\node (r4) at (9, -11) {$\overset{\textnormal{R4}}{\Longrightarrow}$};
\node (r4') at (4, -13) {$\Downarrow\,${\scriptsize R4}};
\node (s4) at (11.25, -10) {\small $C\in\mathrm{sepset}(A, D)$};
\node (s4') at (1.25, -14) {\small $C\notin\mathrm{sepset}(A, D)$};

\node[align=left] (t) at (16, -15) {$\langle A,\ldots ,B, C, D\rangle$ is a discriminating path for $C$};

\draw[anydir]
(A1) edge (B1)
(A1') edge (B1')
(B2) edge (C2)
(B2') edge (C2')
(A2') edge (C2')
(A2'') edge (B2'')
(A2''') edge (B2''')
(A2''') edge (C2''')
(B3) edge (C3)
(D3) edge (C3)
(B3') edge (C3')
(D3') edge (C3')
(A3') edge (C3')
(A4) edge (V1)
(A4') edge (V1')
(A4'') edge (V1'')
;

\draw[anypart]
(C1) edge (B1)
(A2) edge (C2)
(A2'') edge (C2'')
(D3) edge (A3)
(B3) edge (A3)
(D3') edge (A3')
(B3') edge (A3')
(A3) edge (C3)
(D4) edge (C4)
;

\draw[dir]
(B1') edge (C1')
(A2) edge (B2)
(A2') edge (B2')
(B2'') edge (C2'')
(B2''') edge (C2''')
(V1) edge [bend left] (D4)
(V2) edge [bend left] (D4)
(B4) edge [bend left] (D4)
(V1') edge [bend left] (D4')
(V2') edge [bend left] (D4')
(B4') edge [bend left] (D4')
(V1'') edge [bend left] (D4'')
(V2'') edge [bend left] (D4'')
(B4'') edge [bend left] (D4'')
(C4') edge (D4')
;

\draw[bidir]
(V1) edge (V2)
(V2) edge (B4)
(B4) edge (C4)
(V1') edge (V2')
(V2') edge (B4')
(B4') edge (C4')
(V1'') edge (V2'')
(V2'') edge (B4'')
(B4'') edge (C4'')
(D4'') edge (C4'')
;

\end{tikzpicture}
\caption{Orientation rules for the FCI algorithm \citep{spirtes1999fci, zhang2008completeness}. Here $\blacksquare$ is a placeholder edge mark equivalent to $*$.}
\label{fig:fcirules}
\end{figure}

\begin{figure}
\centering
\begin{tikzpicture}[scale = 0.6]

\node (A8) at (0,2) {$A$};
\node (B8) at (0,0) {$B$};
\node (C8) at (2,0) {$C$};

\node (A8') at (4,2) {$A$};
\node (B8') at (4,0) {$B$};
\node (C8') at (6,0) {$C$};

\node (r8) at (3, 1) {$\overset{\textnormal{R8}}{\Longrightarrow}$};

\node (A9) at (0,-2) {$A$};
\node (B9) at (0,-4) {$B$};
\node (C9) at (2,-4) {$C$};
\node (D9) at (2,-2) {$D$};

\node (A9') at (4,-2) {$A$};
\node (B9') at (4,-4) {$B$};
\node (C9') at (6,-4) {$C$};
\node (D9') at (6,-2) {$D$};

\node (r9) at (3, -3) {$\overset{\textnormal{R9}}{\Longrightarrow}$};

\node[align=left] (t1) at (13, -3)  {$\langle B, \ldots, A,\ldots , D,\ldots ,C\rangle$ is a possibly directed\\ unshielded path from $B$ to $C$};

\node (r10) at (5, -7) {$\overset{\textnormal{R10}}{\Longrightarrow}$};

\node (V1) at (0,-6) {$V_1$};
\node (B10) at (0,-8) {$B$};
\node (A10) at (2,-6) {$A$};
\node (C10) at (2,-8) {$C$};
\node (D10) at (4,-8) {$D$};
\node (V2) at (4,-6) {$V_2$};

\node (V1') at (6,-6) {$V_1$};
\node (B10') at (6,-8) {$B$};
\node (A10') at (8,-6) {$A$};
\node (C10') at (8,-8) {$C$};
\node (D10') at (10,-8) {$D$};
\node (V2') at (10,-6) {$V_2$};

\node[align=left] (t2) at (16.5, -7) {$\langle A,\ldots ,B\rangle$ and $\langle A,\ldots ,D\rangle$ are unshielded,\\ $V_1\in\adj{}{A}$ (perhaps $V_1=B$) and\\ $V_2\in\adj{}{A}$ (perhaps $V_2=D$), and\\ $V_1\neq V_2$ and $V_1\notin\adj{}{V_2}$};

\draw[dir]
(A8) edge (B8)
(B8) edge (C8)
(A8') edge (B8')
(B8') edge (C8')
(A8') edge (C8')
(B9') edge (C9')
(B9) edge [dashed] (A9)
(A9) edge [dashed] (D9)
(D9) edge [dashed] (C9)
(B9') edge [dashed] (A9')
(A9') edge [dashed] (D9')
(D9') edge [dashed] (C9')

(A10) edge [dashed] (V1)
(A10) edge [dashed] (V2)
(V1) edge [dashed] (B10)
(V2) edge [dashed] (D10)
(B10) edge (C10)
(D10) edge (C10)
(A10') edge [dashed] (V1')
(A10') edge [dashed] (V2')
(V1') edge [dashed] (B10')
(V2') edge [dashed] (D10')
(B10') edge (C10')
(D10') edge (C10')
(A10') edge (C10')
;

\draw[partdir]
(A8) edge (C8)
(B9) edge (C9)
(A10) edge (C10)
;
    
\end{tikzpicture}
\caption{Orientation rules for the FCI algorithm \citep{zhang2008completeness}}
\label{fig:zhangrules}
\end{figure}
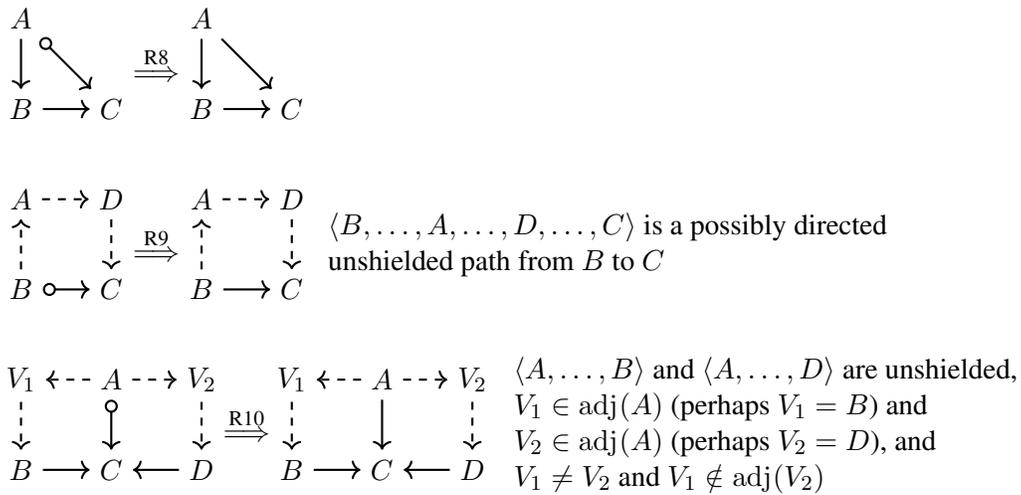

\clearpage

\section{Pseudo-algorithms}\label{app:pseudoalg}

By $\mathscr{P}(\cdot)$ we denote the power set. By $X \indep Y\mid Z$ we indicate conditional independence between $X$ and $Y$ given $Z$. All independence facts are obtained through oracle knowledge.

\setcounter{AlgoLine}{0}

\begin{algorithm}[!htbp]

\caption{The tiered FCI (tFCI) algorithm}
\label{alg:tfci}
\Input{Node set $\V$, tiered ordering $\tau$, and an independence oracle over $\X_\V$} 

\Output{partial mixed graph $\G=(\V,\E)$}

Let $\G=(\V,\E)$ be a complete graph with non-directed edges.

$k=0$

\Repeat{there is no ordered pair of adjacent edges $V_i$ and $V_j$ with $|\adj{\G}{V_i}\cap\mathrm{past}^\tau_\V(V_i)\backslash \{V_j\}|\geq k$}{

\For{each ordered pair of adjacent nodes $V_i$ and $V_j$}{

\If{there exists a $\mathbf{S}\subseteq\adj{\G}{V_i}\cap\mathrm{past}^\tau_\V(V_i)\backslash \{V_j\}$ of size $k$ with $X_{V_i}\indep X_{V_j}\mid\mathbf{X}_\mathbf{S}$}{

remove $\{V_i\nondir V_j\}$ from $\E$

add $\mathbf{S}$ to $\mathrm{sepset}(V_i,V_j)$

} 

} 

$k = k + 1$

} 

\For{each unshielded triple $\langle V_i, V_j, V_k\rangle$}{

\If{$\tau(V_j)>\max(\tau(V_i),\tau(V_k))$ or $V_j\notin\mathrm{sepset}(V_i,V_k)$}{

orient $V_i\any V_j\any V_k$ as $V_i\rightany V_j\leftany V_k$

}

} 

\For{each ordered pair of adjacent nodes $V_i$ and $V_j$}{

\If{there exists a $\mathbf{S}\subseteq\possdsepp{\G}{V_i, V_j}\cap\mathrm{past}^\tau_\V(\{V_i,V_j\})\backslash \{V_j\}$ of size $k$ with $X_{V_i}\indep X_{V_j}\mid\mathbf{X}_\mathbf{S}$}{

remove $\{V_i\nondir V_j\}$ from $\E$

add $\mathbf{S}$ to $\mathrm{sepset}(V_i,V_j)$

} 

} 

replace each edge $\{V_i\any V_j\}$ in $\E$ with $\{V_i\nondir V_j\}$

\For{each unshielded triple $\langle V_i, V_j, V_k\rangle$ }{

\If{$\tau(V_j)>\max(\tau(V_i),\tau(V_k))$ or  $V_j\notin\mathrm{sepset}(V_i,V_k)$}{

orient $V_i\any V_j\any V_k$ as $V_i\rightany V_j\leftany V_k$

} 

} 

\texttt{\#omit lines 30-34 in the simple tFCI algorithm}

\For{each ordered pair of adjacent nodes $V_i, V_j$}{

\If{$\tau(V_i)<\tau(V_j)$}{

orient $V_i\leftpartany V_j$ as $V_i\rightany V_j$

} 

} 

apply orientation rules R1-R4 (Figure \ref{fig:fcirules}) and R8-R10 (Figure \ref{fig:zhangrules}) repeatedly to $\G$ until none applies

\end{algorithm}

\setcounter{AlgoLine}{0}

\begin{algorithm}[!htbp]
\small
\caption{The tiered IOD (tIOD) algorithm}

\Input{Node sets $\V_1,\ldots, \V_n$, tiered ordering $\tau$, and independence oracles over $\X_{\V_1},\ldots,\X_{\V_n}$} 

\Output{set of partial mixed graphs $\mathbfcal{G}$}

Let $\G=(\V,\E)$ be a complete graph with non-directed edges.

\For{each $l\in\{1,\ldots ,n\}$}{

Let $\G_l=(\V_l,\E_l)$ be a complete graph with non-directed edges.

$k = 0$

\Repeat{there is no ordered pair of adjacent edges $V_i$ and $V_j$ with $|\adj{\G_l}{V_i}\cap\mathrm{past}^\tau_{\V_l}(V_i)\backslash \{V_j\}|\geq k$}{

\For{each ordered pair of adjacent nodes $V_i$ and $V_j$}{

\If{there exists a set $\mathbf{S}\subseteq\adj{\G_l}{V_i}\cap\mathrm{past}^\tau_{\V_l}(V_i)$ of size $k$ with $X_{V_i}\indep X_{V_j}\mid \X_\mathbf{S}$}{
remove $\{V_i\nondir V_j\}$ from $\E_l$ and from $\E$

add $\mathbf{S}$ to $\mathrm{sepset}(\G_l, \{V_i,V_j\})$
} 

} 

$k= k+1$
} 

\For{each unshielded triple $\langle V_i, V_j, V_k\rangle$ }{

\If{$\tau(V_j) >\max(\tau(V_i),\tau(V_k))$ or $V_j\notin\mathrm{sepset}(\G_l, \{V_i,V_k\})$}{

orient $V_i\any V_j\any V_k$ as $V_i\rightany V_j \leftany V_k$
} 

} 

\For{each ordered pair of adjacent nodes $V_i$ and $V_j$}{

\uIf{there exists a set $\mathbf{S}\subseteq\possdsepp{\G_l}{V_i, V_j}\cap\mathrm{past}^\tau_{\V_l}(\{V_i, V_j\})$  with $X_{V_i}\indep X_{V_j}\mid \X_\mathbf{S}$}{
remove $\{V_i\nondir V_j\}$ from $\E_l$ and from $\E$

add $\mathbf{S}$ to $\mathrm{sepset}(\G_l, \{V_i,V_j\})$
} 
\Else{

add $\langle \{ V_i,V_k\},\V_l\rangle$ to \textbf{InducingPaths}
} 

} 

\For{each unshielded triple $\langle V_i, V_j, V_k\rangle$ in $\G_l$}{
\If{$V_j\in\adj{\G}{V_i}\cup\adj{\G}{V_k}$, and $\tau(V_j) >\max(\tau(V_i),\tau(V_k))$ or $V_j\notin\mathrm{sepset}(\G_l, \{V_i,V_k\})$}{

orient any $V_i\any V_k$ as $V_i\rightany V_k$, and any $V_k\any V_j$ to $V_k\leftany V_j$, in $\G$
} 

} 

} 
\For{each pair of adjacent nodes $V_i$ and $V_j$ in $\G$}{

\If{for all $l\in\{1,\ldots ,n\}\\ \{ V_i, V_j\}\cup \left(\adj{\G}{V_i}\cap\mathrm{past}^\tau_\V(V_i)\right)\cup
\left((\possdsepp{\G}{V_i,V_j}\cup\possdsepp{\G}{V_j,V_i})\cap\mathrm{past}^\tau_\V(\{V_i, V_j\})\right)\not\subset\V_l$ and\\ $\{ V_i, V_j\}\cup\left(\adj{\G}{V_j}\cap\mathrm{past}^\tau_\V(V_j)\right)\cup
\left((\possdsepp{\G}{V_i,V_j}\cup\possdsepp{\G}{V_j,V_i})\cap\mathrm{past}^\tau_\V(\{V_i, V_j\})\right)\not\subset\V_l$}{

add $\{V_i\any V_j\}$ to \textbf{RemoveEdges}
} %

} 

\end{algorithm}

\setcounter{algocf}{2}

\begin{algorithm}

\setcounter{AlgoLine}{37}

\caption{The tiered IOD (tIOD) algorithm (continued)}
\label{alg:tiod}
\small
\Input{Node sets $\V_1,\ldots, \V_n$, tiered ordering $\tau$, and independence oracles over $\X_{\V_1},\ldots,\X_{\V_n}$} 

\Output{set of partial mixed graphs $\mathbfcal{G}$}

\For{each $\E'\in\mathscr{P}(\mathbf{RemoveEdge})$}{

\For{each $V_j\in\V$ and pair of nodes $V_i,V_k\in\adj{\G_{\E\backslash\E'}}{V_j}$}{

\If{the triple $\langle V_i, V_j, V_k\rangle$ can be oriented as a v-structure in $\G_{\E\backslash\E'}$ and for every $l\in\{1,\ldots ,n\}$ either $V_j\notin\V_l$ or $\mathrm{sepset}(\G_l, \{V_i, V_k\}$ is undefined}{

\uIf{$\tau(V_j)>\max(\tau(V_i),\tau(V_j))$}{

orient $\langle V_i, V_j, V_k\rangle$ as a v-structure

}\Else{

add $\langle V_i, V_j, V_k\rangle$ to $\mathbf{OrientVstructure}$
}
} 

} 

\For{each $\V'\in\mathscr{P}(\mathbf{OrientVstructure})$}{

$\G_{\E\backslash\E'}^{\V'}=\G_{\E\backslash\E'}$ 

orient every triple in $\V'$ as a v-structure in $\G_{\E\backslash\E'}^{\V'}$

\Repeat{all graphs are closed under orientation rules R1-R4 and R8-R10}{

apply orientation rules R1-R4 and R8-R10

\If{a discriminating path is found in R4}{

construct two graphs from $\G_{\E\backslash\E'}^{\V'}$ with each direction of the discriminated collider and continue orienting edges in both graphs
} 

} 

add all graphs to $\mathbf{Possible}\mathbfcal{G}$

\For{each $\G'=(\V,\E'')\in \mathbf{Possible}\mathbfcal{G}$}{

$\M=\G'$

$\mathbfcal{G}'=\{\G'\}$

\For{each $\{V_i\rightpart V_j\}\in\E''$}{

orient $V_i\rightpart V_j$ as $V_i\rightarrow V_j$ in $\M$
} 

orient $\M_U$ as a DAG with no new v-structures and replace every edge $\{V_i\nondir V_j\}$ in $\M$ with the corresponding edge from $\M_U$

\If{(i) $\M$ is a MAG, and (ii) each set in $\mathrm{sepset}$ corresponds to an independence in $\I(\M)$,  (iii) for every $l$ for each $\langle \{V_i, V_j\}, \V_l\rangle\in\mathbf{InducingPaths}$ there is an inducing path between $V_i$ and $V_j$ with respect to $\V\backslash\V_l$ in $\M$, and (iv) if for every non-adjacent pair $A$ and $B$ in $\M$ there exists a set $\mathbf{S}’\subseteq \mathrm{past}^\tau_\V(A,B)$ such that $A$ and $B$ are m-separated by $\mathbf{S}’$ in $\M$}{

\texttt{\#omit lines 67-72 in the simple tIOD algorithm}

\For{each ordered pair of adjacent nodes $V_i, V_j$ in $\G'$}{

\If{$\tau(V_i)<\tau(V_j)$}{

orient $V_i\leftpartany V_j$ as $V_i\rightany V_j$

} 

} 

apply rules R1-R4 and R8-R10673 to $\G'$ until none applies. If a discriminating path is found then construct two graphs with each direction of the discriminated collider and continue orienting edges in both graphs. Let $\mathbfcal{G}'$ be the set of these graphs.

add every $\G'\in\mathbfcal{G}'$ to $\mathbfcal{G}$

}

} 

} 

} 
    
\end{algorithm}

\begin{remark}
    The tIOD algorithm may also be made more efficient by, e.g., not considering nodes in $\adj{\G}{A}$ and $\adj{\G}{B}$ at lines 33-36 of Algorithm \ref{alg:tiod} when determining whether an edge $A\any B$ is removable. However, we focus on the gain in efficiency by adding tiered background knowledge, and we choose to otherwise follow what is done in the original IOD algorithm.
\end{remark}

\section{Number of graphs output in Example \ref{ex:iod} and Example \ref{ex:tiod}}\label{sec:example}

\begin{table}[!htbp]
    \centering
    \begin{tabular}{l | m{4.25em} | m{4.5em} | m{4.25em}}

    $\E'\in\mathscr{P}(\mathbf{RemoveEdges})$ & Possible v-struct. & Graphs considered & Graphs output\\ 
    \hline\hline
$\emptyset$ & 1 & 2 & 1\\ 
\hline
$\{A\nondir D\}$ & 3 & 8 & 4\\ 
$\{A\nondir B\}$ & 2 & 4 & 0\\
$\{A\nondir C\}$ & 2 & 4 & 0\\
$\{B\nondir D\}$ & 1 & 2 & 1\\
$\{C\nondir D\}$ & 1 & 2 & 1\\
\hline
$\{\{A\nondir D\},\{A\nondir B\}\}$ & 2 & 4 & 0 \\
$\{\{A\nondir D\},\{A\nondir C\}\}$ & 2 & 4 & 0 \\
$\{\{A\nondir D\},\{B\nondir D\}\}$ & 1 & 2 & 0\\
$\{\{A\nondir D\},\{C\nondir D\}\}$ & 1 & 2 & 0\\
$\{\{A\nondir B\},\{A\nondir C\}\}$ & 3 & 5 & 0 \\
$\{\{A\nondir B\},\{B\nondir D\}\}$ & 0 & 1 & 0 \\
$\{\{A\nondir B\},\{C\nondir D\}\}$ & 2 & 4 & 0 \\
$\{\{A\nondir C\},\{B\nondir D\}\}$ & 2 & 4 & 0 \\
$\{\{A\nondir C\},\{C\nondir D\}\}$ & 0 & 1 & 0 \\
$\{\{B\nondir D\},\{C\nondir D\}\}$ & 2 & 4 & 1\\
\hline
$\{\{A\nondir D\},\{A\nondir B\}, \{A\nondir C\}\}$ & 1 & 2 & 0\\
$\{\{A\nondir D\},\{A\nondir B\}, \{B\nondir D\}\}$ & 1 & 2 & 0\\
$\{\{A\nondir D\},\{A\nondir B\}, \{C\nondir D\}\}$ & 0 & 1 & 0\\
$\{\{A\nondir D\},\{A\nondir C\}, \{B\nondir D\}\}$ & 0 & 1 & 0 \\
$\{\{A\nondir D\},\{A\nondir C\}, \{C\nondir D\}\}$ & 1 & 2 & 0\\
$\{\{A\nondir D\},\{B\nondir D\}, \{C\nondir D\}\}$ & 0 & 1 & 0\\
$\{\{A\nondir B\},\{A\nondir C\}, \{B\nondir D\}\}$ & 1 & 2 & 0\\
$\{\{A\nondir B\},\{A\nondir C\}, \{C\nondir D\}\}$ & 1 & 2 & 0\\
$\{\{A\nondir B\},\{B\nondir D\}, \{C\nondir D\}\}$ & 1 & 2 & 0\\
$\{\{A\nondir C\},\{B\nondir D\}, \{C\nondir D\}\}$ & 1 & 2 & 0\\
\hline
$\{\{A\nondir B\},\{A\nondir C\}, \{A\nondir D\},\{B\nondir D\}\}$ & 0 & 1 & 0\\
$\{\{A\nondir C\},\{A\nondir D\}, \{B\nondir D\},\{C\nondir D\}\}$ & 0 & 1 & 0\\
$\{\{A\nondir D\},\{B\nondir D\}, \{C\nondir D\},\{A\nondir B\}\}$ & 0 & 1 & 0\\
$\{\{B\nondir D\},\{C\nondir D\}, \{A\nondir B\},\{A\nondir C\}\}$ & 0 & 1 & 0\\
$\{\{C\nondir D\},\{A\nondir B\}, \{A\nondir C\},\{A\nondir B\}\}$ & 0 & 1 & 0\\
\hline
$\E$ & 0 & 1 & 0 \\
\hline \hline
Total & 32 & 73 & 8\\

    \end{tabular}
    \caption{Number of graphs output by the IOD algorithm in Example \ref{ex:iod}}
    \label{tab:ex1}
\end{table}

\begin{table}[!htbp]
    \centering
    \begin{tabular}{l | m{4.25em} | m{4.5em} | m{4.25em}}

    $\E'\in\mathscr{P}(\mathbf{RemoveEdges})$ & Possible v-struct. & Graphs visited & Graphs output\\ 
    \hline\hline
$\emptyset$ & 0 & 1 & 1\\ 
\hline
$\{A\nondir D\}$ & 2 & 4 & 4\\ 
$\{B\nondir D\}$ & 1 & 2 & 1\\
$\{C\nondir D\}$ & 1 & 2 & 1\\
\hline
$\{\{A\nondir D\},\{B\nondir D\}\}$ & 1 & 2 & 0\\
$\{\{A\nondir D\},\{C\nondir D\}\}$ & 1 & 2 & 0 \\
$\{\{B\nondir D\},\{C\nondir D\}\}$ & 2 & 4 & 1 \\
\hline

$\{\{A\nondir D\},\{B\nondir D\}, \{C\nondir D\}\}$ & 0 & 1 & 0\\

\hline \hline
Total & 8 & 18 & 8\\

    \end{tabular}
    \caption{Number of graphs visited and output by the tIOD algorithm in Example \ref{ex:tiod}}
    \label{tab:ex2}
\end{table}

\section{Proofs}\label{app:proofs}

\renewcommand*{\proofname}{Proof of Proposition \ref{prop:past}}

\begin{proof}
The ``if'' part is trivial, we show the ``only if'' part. Assume $\G$ is a DAG. Then $A$ and $B$ can be d-separated in $\G$ by some $\mathbf{S}\subset\V$ if and only if they can be d-separated by $\pa{\G}{A}$ or $\pa{\G}{B}$ \citep{verma1990equivalence, spirtes1991algorithm}. Since $\pa{\G}{A}\subseteq\mathrm{past}^{\tau}_\V(A)$ and $\pa{\G}{B}\subseteq\mathrm{past}^{\tau}_\V(B)$ for any $\tau$ consistent with $\G$, if $A$ and $B$ cannot be d-separated by any $\mathbf{S}\subseteq\mathrm{past}^{\tau}_\V(A)$ or $\mathbf{S}'\subseteq\mathrm{past}^{\tau}_\V(B)$, then they cannot be d-separated by $\pa{\G}{A}$ or $\pa{\G}{B}$, and then no subset of $\V$ d-separates them. Assume instead that $\G$ is a MAG.  Then $A$ and $B$ can be m-separated in $\G$ by any $\mathbf{S}\subset\V$ if and only if they can be m-separated by $\dsep{\G}{A,B}$ or $\dsep{\G}{B,A}$  by Lemma \ref{lemma:dsep}. Recall that $\dsep{\G}{A,B}\subseteq\an{\G}{\{ A,B\}}$ (Definition \ref{def:dsep}). Note that $\an{\G}{A}\subseteq\mathrm{past}^\tau_\V(A)$ and $\an{\G}{B}\subseteq\mathrm{past}^\tau_\V(B)$ for any $\tau$ consistent with $\G$, and it follows that $\dsep{\G}{A,B}\subseteq \mathrm{past}^\tau_\V(A)\cup\mathrm{past}^\tau_\V(B)$. Assume without loss of generality that $\tau(A)\leq\tau(B)$, then $\mathrm{past}^\tau_\V(A)\subseteq\mathrm{past}^\tau_\V(B)$ and then $\dsep{\G}{A,B}\subseteq\mathrm{past}^\tau_\V(B) $: If there is no $\mathbf{S}'\subseteq\mathrm{past}^{\tau}_\V(B)$ that m-separates $A$ and $B$, then they are not m-separated by $\dsep{\G}{A,B}$, and then no subset of $\V$ m-separates them. The same holds for $\dsep{\G}{B,A}$.
\end{proof}

\renewcommand*{\proofname}{Proof}

\begin{lemma}\label{lemma:possdsep}
    Let $\M=(\V,\E)$ be a MAG and $A$ and $B$ distinct nodes in $\V$. Let $\G$ be the graph obtained at line 22 of the tFCI algorithm (Algorithm \ref{alg:tfci}) when estimating the equivalence class of $\M$ using oracle knowledge. Then $A$ and $B$ are adjacent in $\M$ if and only if they are adjacent in $\G$.
\end{lemma}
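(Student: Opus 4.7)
The plan is to mirror the classical skeleton-correctness argument for the FCI algorithm, using Proposition \ref{prop:past} as the one new ingredient needed to justify restricting the candidate separating sets to the past.

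The ``only if'' direction is immediate: if $A$ and $B$ are adjacent in $\M$ then no subset of $\V$ m-separates them, so by faithfulness no conditioning set tested in lines 5 or 21 witnesses conditional independence; hence the edge $A \nondir B$ is never removed and $A, B$ remain adjacent in $\G$.

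For the ``if'' direction, I would take a non-adjacent pair $A, B$ in $\M$ and produce a separating set that the algorithm is guaranteed to find. Lemma \ref{lemma:dsep} provides $\mathbf{S} := \dsep{\M}{A,B}$ (or symmetrically $\dsep{\M}{B,A}$) as an m-separator. Definition \ref{def:dsep} gives $\mathbf{S} \subseteq \an{\M}{\{A,B\}}$, and because $\tau$ is consistent with $\M$ every ancestor of $\{A,B\}$ lies in $\mathrm{past}^\tau_\V(\{A,B\})$; hence $\mathbf{S} \subseteq \mathrm{past}^\tau_\V(\{A,B\})$. The remaining inclusion $\mathbf{S} \subseteq \possdsepp{\G}{A,B}$ is the standard FCI fact: each $V \in \mathbf{S}$ is linked to $A$ in $\M$ by a path of bidirected edges among ancestors of $\{A,B\}$, and I would show this path still exists in $\G$ at the point where line 21 is evaluated. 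Combined with the past-containment above, $\mathbf{S}$ lies in the past-restricted search space tested at line 21, so that test succeeds and deletes $A \nondir B$.

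The main obstacle is the survival of the bidirected path exhibiting each $V \in \mathbf{S}$ in the intermediate $\G$. Two claims are required: (a) no true $\M$-adjacency is removed in the first phase (lines 3--13); and (b) the v-structure orientations in lines 15--18 only install correct arrowheads, so the colliders on the path are preserved. Claim (a) holds because no subset of $\adj{\G}{V_i} \cap \mathrm{past}^\tau_\V(V_i)$ can witness independence between $X_{V_i}$ and $X_{V_j}$ when these are adjacent in $\M$. Claim (b) holds because the tiered branch of the v-structure rule is sound by the definition of $\K_\tau$, while the sepset branch is sound because the sepsets accumulated in the first phase are genuine m-separators; this last point is precisely where Proposition \ref{prop:past} is used, since it guarantees that whenever a separator exists it can be found within the past-restricted neighbourhood queried at line 5. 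Putting (a) and (b) together yields $\mathbf{S} \subseteq \possdsepp{\G}{A,B}$ and completes the argument.
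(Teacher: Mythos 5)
Your proposal is correct and follows essentially the same route as the paper: both reduce the claim to the classical FCI skeleton argument by showing that the past-restricted searches and the tier-based v-structure orientations change nothing under oracle knowledge, and then use Lemma \ref{lemma:dsep} together with Proposition \ref{prop:past} to place $\dsep{\M}{A,B}$ inside the past-restricted search space of line 21. The only ingredient you fold into ``the standard FCI fact'' that the paper makes explicit is the refinement from $\possdsep{\G}{A,B}$ to $\possdsepp{\G}{A,B}$ (nodes on paths between $A$ and $B$), which requires the additional lemma of \citet{colombo2012learning} rather than only Lemma 13 of \citet{spirtes1999fci}.
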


\begin{proof}
    First, we could have obtained the same graph at line 11 if we had run the algorithm without using background knowledge c.f. Proposition \ref{prop:past}, since we have oracle knowledge of the conditional independences and correct background knowledge. 
    Second, note that given oracle knowledge, any v-structure oriented at line 14 based on the tiered ordering would also have been oriented without background knowledge. Since we obtain the same graph at line 16 as we would have without background knowledge, by Lemma 13 in \citet{spirtes1999fci} $\dsep{\M}{A,B}\subseteq\possdsep{\G}{A,B}$, and by Lemma 1.2 in \citet{colombo2012learning}, $A$ and $B$ can be m-separated by any set in $\M$ if and only if they are m-separated by a set of nodes that all lie on a path between $A$ and $B$ in $\M$, i.e. $\dsep{\M}{A,B}\subseteq\possdsepp{\G}{A,B}$. By Proposition \ref{prop:past}, $A$ and $B$ can be m-separated in $\M$ if and only if they are m-separated by a subset of $\mathrm{past}^\tau_\V(\{A,B\})$ for any $\tau$ consistent with $\M$. Hence, $\dsep{\M}{A,B}\subseteq\possdsepp{\G}{A,B}\cap\mathrm{past}^\tau_\V(\{A,B\})$ and every potential separating set is checked. Hence, $A$ and $B$ are adjacent in $\M$ if and only if they are adjacent in $\G$.
\end{proof}

\renewcommand*{\proofname}{Proof of Proposition \ref{prop:simpletfci}}

\begin{proof}
    Let $\M$ be a MAG and let $\p$ be the PAG representing the equivalence class of $\M$. We show that if we use oracle knowledge of the conditional independencies implied by $\M$ as input, then the simple tFCI algorithm  outputs $\p$. By Lemma \ref{lemma:possdsep}, the skeleton obtained at line 22 is the skeleton of $\M$. Due to the background knowledge being correct, no v-structure, that could not have been found from the conditional independencies, is oriented by background knowledge. Since we have oracle knowledge of the conditional independencies, every v-structure that cannot be oriented from background knowledge will be oriented by the conditional independencies. The orientation rules R1-R4 and R8-R10 are sound and complete given the correct adjacencies and v-structures \citep{spirtes1999fci, zhang2008completeness}, hence the graph $\G$ output by the algorithm is the PAG $\p$ representing the equivalence class of $\M$.
\end{proof}

\renewcommand*{\proofname}{Proof of Proposition \ref{prop:fulltfci}}

\begin{proof}
     Let $\M$ be a MAG. We show that if we use oracle knowledge of the conditional independencies implied by $\M$ and correct background knowledge as input the full tFCI algorithm outputs a PMG $\G$ that  does not contain any arrowhead or tail that is not in $\M$. By Proposition \ref{prop:simpletfci}, the graph obtained at line 28 has the same skeleton and v-structures as $\M$. We only need to argue that the additional arrowheads and tails obtained in the full tFCI algorithm are correct. The arrowhead orientation in line 32 is sound due the background knowledge being correct. The orientation rules R1-R4 and R8-R10 are sound in the sense that they prevent the encoded independence model to change, any new cycles to occur, or any orientations that  contradict the ancestral relations encoded \citep{spirtes1999fci, zhang2008completeness}. Hence, they do not introduce any new information not already known from the conditional independencies and background knowledge, and the graph output after applying these rules does not contain any arrowheads or tails not in $\M$.
\end{proof}

\renewcommand*{\proofname}{Proof}

In order to prove Proposition \ref{prop:simpletiod}, we need some extensions of the results in \citet{tillman2011learning}, and we follow their proof strategy of their Theorem 5.1 and 5.2 to a large extend. Here, Corollary \ref{cor:sep} (which follows from Proposition \ref{prop:simpletfci}) is analogous to Theorem 7.2, Corollary \ref{cor:sep2} extends Corollary 7.1, Lemma \ref{lemma:sep} extends Lemma 7.1, Lemma \ref{lemma:adj} extends Lemma 7.6, and Lemma \ref{lemma:vstruct} extends Lemma 7.7.

\begin{corollary}\label{cor:sep}
    Let $\mathrm{sepset}$ be constructed as in the simple tIOD algorithm (Algorithm \ref{alg:tiod}), and for all $i\in\{1,\ldots ,n\}$ let $\M_i=(\V_i,\E_i)$ be a MAG. For every $i\in\{1,\ldots ,n\}$ if for all $A,B\in\V_i$ such that $\mathrm{sepset}(\G_i, \{A,B\})$ is defined $A\perp_{\M_i} B\mid\mathrm{sepset}(\G_i, \{A,B\})$, then for all $A,B\in\V_i$ and $\mathbf{S}\subseteq\V_i\backslash\{A,B\}$: $X_A\indep X_B\mid \mathbf{X}_\mathbf{S}\Rightarrow A\perp_{\M_i} B\mid \mathbf{S}$
\end{corollary}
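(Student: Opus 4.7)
The plan is to combine the hypothesis with the soundness and completeness of the simple tFCI (Proposition \ref{prop:simpletfci}) to show that $\M_i$ lies in the Markov equivalence class recovered by the algorithm, so that its induced independence model coincides with the one realised by $X_{\V_i}$.

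First, I would fix $i$ and let $\M_i^{\ast}$ denote the MAG that is faithful to the marginal distribution $X_{\V_i}$. Lines 4-24 of Algorithm \ref{alg:tiod} apply the simple tFCI to the CI oracle for $X_{\V_i}$, so by Proposition \ref{prop:simpletfci} the graph obtained, together with the stored sepsets, identifies the PAG $[\M_i^{\ast}]$. In particular, for each non-adjacent pair in that PAG, $\mathrm{sepset}(\G_i, \{A,B\})$ is an m-separator of $A$ and $B$ in $\M_i^{\ast}$.

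Second, under the hypothesis the same sets also m-separate $A$ and $B$ in $\M_i$, so $\M_i$ and $\M_i^{\ast}$ share their non-adjacencies and hence their skeleton. For every unshielded triple $\langle A, C, B\rangle$, $C$ is a collider in $\M_i^{\ast}$ exactly when $C \notin \mathrm{sepset}(\G_i, \{A, B\})$, and by the hypothesis the same characterisation holds in $\M_i$; thus v-structures coincide. An analogous argument for each discriminating path, based on the soundness of rule R4 under valid sepsets, transfers the remaining orientations of $[\M_i^{\ast}]$ to $\M_i$, so $\M_i \in [\M_i^{\ast}]$ and therefore $\I(\M_i) = \I(\M_i^{\ast})$.

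Third, faithfulness of $X_{\V_i}$ to $\M_i^{\ast}$ yields $X_A \indep X_B \mid \mathbf{X}_\mathbf{S} \Rightarrow A \perp_{\M_i^{\ast}} B \mid \mathbf{S}$, and Markov equivalence then gives $A \perp_{\M_i} B \mid \mathbf{S}$, completing the argument. The main obstacle is the second step: showing that validity of the stored sepsets in $\M_i$ is strong enough to fix all orientations of the PAG in $\M_i$, especially along discriminating paths where the collider versus non-collider distinction is driven by sepset membership. A careful verification would trace through rules R1--R4 and R8--R10 and check that every orientation triggered in $\M_i^{\ast}$ is also forced in $\M_i$ by the hypothesis, with Proposition \ref{prop:past} ensuring that the past-restricted sepset search is never an obstruction to detecting the relevant independencies.
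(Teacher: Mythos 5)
Your opening step matches the paper's (very brief) justification: the paper proves this corollary only by pointing to Proposition \ref{prop:simpletfci} --- which guarantees that, despite the past-restricted search, the recorded sepsets are genuine m-separators of exactly the non-adjacent pairs of the true marginal MAG $\M_i^*$ --- and by declaring the rest ``analogous to Theorem 7.2'' of \citet{tillman2011learning}. The problem lies in your second step. The hypothesis constrains $\M_i$ only on pairs for which $\mathrm{sepset}(\G_i,\{A,B\})$ is \emph{defined}: it forces every non-adjacency of $\M_i^*$ to be a non-adjacency of $\M_i$, but it forces no adjacencies. So ``$\M_i$ and $\M_i^{\ast}$ share their non-adjacencies and hence their skeleton'' does not follow, and the intermediate claim $\M_i\in[\M_i^*]$ is false in general. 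Concretely, if $\M_i^*$ is complete over $\V_i$, no sepset is defined, the hypothesis is vacuous, and \emph{every} MAG over $\V_i$ (including the empty one) satisfies it while failing to be Markov equivalent to $\M_i^*$. The corollary still holds there --- vacuously, since the data then exhibit no conditional independencies --- which shows that the correct target is only the one-directional inclusion $\I(\M_i^*)\subseteq\I(\M_i)$, not Markov equivalence.

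Once the common-skeleton claim fails, the rest of your chain breaks with it: the collider/non-collider characterisation via sepset membership only transfers to $\M_i$ for triples that are unshielded \emph{in $\M_i$}, and the discriminating-path step (which you yourself flag as the main obstacle) has no footing. The final appeal to ``Markov equivalence then gives $A\perp_{\M_i}B\mid\mathbf{S}$'' therefore does not go through. What is actually needed is the statement of Theorem 7.2 in \citet{tillman2011learning}: any MAG that satisfies the finitely many recorded separation statements must entail \emph{all} m-separations of $\M_i^*$. That is a claim about how the stored sepset facts generate the full independence model, proved without classifying $\M_i$ up to equivalence. To repair your argument you would either reproduce that proof or cite it, reserving Proposition \ref{prop:simpletfci} (or Lemma \ref{lemma:possdsep}) for the only genuinely new ingredient here, namely that restricting candidate separators to $\mathrm{past}^\tau_{\V_i}$ never causes a separable pair to be missed.
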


From this we have the following result.

\begin{corollary}\label{cor:sep2}
    Let $\mathrm{sepset}$ be constructed as in the simple tIOD algorithm (Algorithm \ref{alg:tiod}), and let $\M=(\V,\E)$ be a MAG. Then for all $i\in\{1,\ldots ,n\}$ if for all $A,B\in\V_i$ such that $\mathrm{sepset}(\G_i, \{A,B\})$ is defined, $A\perp_{\M} B\mid\mathrm{sepset}(\G_i, \{A,B\})$, then for all $A,B\in\V_i$ and $\mathbf{S}\subseteq\V_i\backslash\{A,B\}$: $X_A\indep X_B\mid \mathbf{X}_\mathbf{S}\Rightarrow A\perp_{\M} B\mid \mathbf{S}$
\end{corollary}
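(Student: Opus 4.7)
The plan is to reduce the statement to Corollary \ref{cor:sep} by invoking the standard correspondence between m-separation in a MAG and in its marginals. Concretely, for each $i\in\{1,\ldots,n\}$ I would consider the marginal MAG $\M_i$ obtained from $\M=(\V,\E)$ by marginalising over $\V\backslash\V_i$ (as constructed in \citet{richardson2002ancestral}). The defining property of this construction, which I would cite, is that for any pair of distinct nodes $A,B\in\V_i$ and any $\mathbf{S}\subseteq\V_i\backslash\{A,B\}$ we have $A\perp_\M B\mid\mathbf{S}$ if and only if $A\perp_{\M_i}B\mid\mathbf{S}$.

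Using this equivalence, I would translate the hypothesis: whenever $\mathrm{sepset}(\G_i,\{A,B\})$ is defined, it is a subset of $\V_i\backslash\{A,B\}$, so the assumption $A\perp_\M B\mid\mathrm{sepset}(\G_i,\{A,B\})$ is equivalent to $A\perp_{\M_i}B\mid\mathrm{sepset}(\G_i,\{A,B\})$. Hence the hypotheses of Corollary \ref{cor:sep} apply to the family of marginal MAGs $\M_1,\ldots,\M_n$, and from it I obtain: for every $i$, for all $A,B\in\V_i$ and $\mathbf{S}\subseteq\V_i\backslash\{A,B\}$, $X_A\indep X_B\mid \mathbf{X}_\mathbf{S}$ implies $A\perp_{\M_i}B\mid\mathbf{S}$.

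Finally, I would apply the marginalisation equivalence in the opposite direction: $A\perp_{\M_i}B\mid\mathbf{S}$ implies $A\perp_\M B\mid\mathbf{S}$, for $\mathbf{S}\subseteq\V_i\backslash\{A,B\}$. Chaining the two implications yields the desired conclusion for $\M$.

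The step I expect to be the main (only real) obstacle is to make the marginalisation equivalence sufficiently explicit. Since separating sets here are always subsets of $\V_i$, the statement needed is only one direction of the classical result about marginal MAGs (that marginal m-separations are preserved and reflected for conditioning sets within $\V_i$), and no reasoning about inducing paths over the latent part is required beyond citing \citet{richardson2002ancestral}. Everything else is a direct application of Corollary \ref{cor:sep}.
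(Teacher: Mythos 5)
Your proposal is correct and follows essentially the same route as the paper: the paper's proof simply defers to Corollary 7.1 of \citet{tillman2011learning}, whose argument is precisely the reduction you describe -- pass to the marginal MAGs $\M_i$ of $\M$ over each $\V_i$, use that m-separation for conditioning sets within $\V_i$ is preserved and reflected under marginalisation \citep{richardson2002ancestral}, and apply Corollary \ref{cor:sep}. You have merely made explicit the details that the paper leaves to the citation.
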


\begin{proof}
    This proof is identical to the proof of Corollary 7.1 in \citet{tillman2011learning}.
\end{proof}

\begin{lemma}\label{lemma:sep}
Let $\G'$ be a graph that is added to $\mathbfcal{G}$ at line 73 in the simple tIOD algorithm (Algorithm \ref{alg:tiod}), and let $\M$ be the graph obtained at line 64. Then $\M$ is a MAG and for all $i\in\{1,\ldots ,n\}$, $A,B\in\V_i$ and $\mathbf{S}\subseteq\V_i\backslash\{A,B\}$ $X_A\indep X_B\mid \mathbf{X}_\mathbf{S}\Leftrightarrow A\perp_\M B\mid \mathbf{S}$.
\end{lemma}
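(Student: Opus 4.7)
The plan is to use condition (i) at line 64 to conclude immediately that $\M$ is a MAG, and then prove the biconditional by handling each implication separately, following the proof strategy of Lemma 7.1 in \citet{tillman2011learning} while invoking Proposition \ref{prop:past} and Corollary \ref{cor:sep2} to adapt the argument to the past-restricted conditioning sets used by the simple tIOD algorithm.

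For the forward direction $X_A\indep X_B\mid\mathbf{X}_\mathbf{S}\Rightarrow A\perp_\M B\mid\mathbf{S}$, I would simply invoke Corollary \ref{cor:sep2}: condition (ii) at line 64 states precisely its hypothesis, namely that every entry of $\mathrm{sepset}(\G_l,\cdot)$ corresponds to an m-separation in $\M$. Applying the corollary with index $i$ then yields the implication for every $A,B\in\V_i$ and $\mathbf{S}\subseteq\V_i\backslash\{A,B\}$.

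For the reverse direction, suppose $A\perp_\M B\mid\mathbf{S}$ with $A,B\in\V_i$ and $\mathbf{S}\subseteq\V_i\backslash\{A,B\}$. First I would observe that $A$ and $B$ must be non-adjacent in $\M$ and that there can be no inducing path between them with respect to $\V\backslash\V_i$ in $\M$: every non-endpoint on such a path is a collider and an ancestor of an endpoint, so the path is m-connecting given any subset of $\V_i\backslash\{A,B\}$, contradicting the hypothesis. The contrapositive of condition (iii) then gives $\langle\{A,B\},\V_i\rangle\notin\mathbf{InducingPaths}$, which by inspection of the edge-removal phases of Algorithm \ref{alg:tiod} means that some set $\mathbf{S}^*$ with $X_A\indep X_B\mid\mathbf{X}_{\mathbf{S}^*}$ was recorded in $\mathrm{sepset}(\G_i,\{A,B\})$; by condition (ii), $A\perp_\M B\mid\mathbf{S}^*$ as well. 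To transfer the specific set $\mathbf{S}$ from $\M$ to the data, I would then argue that the marginal independence models $\I(\M_{\V_i})$ and $\I(\M^*_{\V_i})$ coincide, where $\M^*$ denotes the true underlying MAG: the two marginal MAGs share the same skeleton (by the adjacency argument above, reasoning as in Lemma \ref{lemma:possdsep} restricted to $\V_i$ and using Proposition \ref{prop:past} to ensure no past-candidate separating set is missed) and the same sepset-induced v-structures and discriminating-path orientations. Their resulting Markov equivalence then yields $A\perp_{\M^*} B\mid\mathbf{S}$, hence $X_A\indep X_B\mid\mathbf{X}_\mathbf{S}$ by faithfulness.

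The main obstacle is the very last step: bridging from "some set $\mathbf{S}^*$ separates $A,B$ in the data" to "the specific $\mathbf{S}$ separates them in the data". It amounts to showing that the simple-tFCI-like procedure executed by the algorithm on each individual dataset is sound and complete in a form strong enough to equate the marginal Markov equivalence classes of $\M$ and $\M^*$ on $\V_i$. Completeness here rests on Proposition \ref{prop:past} (to justify restricting the separating sets to the past without loss) and on the completeness of the simple tFCI established in Proposition \ref{prop:simpletfci}; once the marginal equivalence classes agree, the biconditional on $\V_i$ follows immediately.
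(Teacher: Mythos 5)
Your decomposition matches the paper's: $\M$ is a MAG by condition (i), and the forward implication $X_A\indep X_B\mid\mathbf{X}_\mathbf{S}\Rightarrow A\perp_\M B\mid\mathbf{S}$ follows from condition (ii) together with Corollary \ref{cor:sep2}, exactly as in the paper. The divergence --- and the gap --- is the reverse implication. The paper discharges it in one step: condition (iii) guarantees that every pair recorded in $\mathbf{InducingPaths}$ has an inducing path in $\M$ with respect to $\V\backslash\V_i$, and Theorem 3.2 of \citet{tillman2011learning} then yields that the restriction of $\I(\M)$ to $\V_i$ coincides with the learned marginal independence model, which is precisely the ``specific $\mathbf{S}$'' statement you identify as the main obstacle. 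You instead try to re-derive this by showing that the marginal MAG of $\M$ over $\V_i$ and that of the true MAG are Markov equivalent via agreement on skeleton, v-structures and discriminating-path orientations. The skeleton and v-structure agreement can be pushed through roughly as you sketch, but agreement on colliders discriminated by discriminating paths --- which is required for Markov equivalence of MAGs and cannot be dropped --- is simply asserted; nothing in conditions (i)--(iv) or in your argument delivers it. So the step you flag as the ``main obstacle'' remains genuinely open in your write-up, and it is exactly the content of the marginalization theorem the paper cites.

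A secondary issue: you invoke Proposition \ref{prop:simpletfci} and the completeness of the simple tFCI to relate the per-dataset graphs to the true MAG, but the object this lemma concerns is the candidate MAG $\M$ constructed at line 64, not the true MAG; the only link from the algorithm's sepsets and $\mathbf{InducingPaths}$ to $\M$ runs through conditions (ii) and (iii), so completeness of the per-dataset search does not by itself transfer the specific separating set $\mathbf{S}$ from $\M$ to the data. The cleanest repair is to drop the Markov-equivalence detour and appeal directly to the inducing-path characterization of marginal independence models (Theorem 3.2 in \citet{tillman2011learning}, which rests on Theorem 4.2 of \citet{richardson2002ancestral}), as the paper does.
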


\begin{proof}
    This proof directly follows from the proof of Lemma 7.1 in \citet{tillman2011learning}. $\M$ is a MAG by condition (i) at line 65. By condition (ii) at line 65 and Corollary \ref{cor:sep2}, $X_A\indep X_B\mid \mathbf{X}_\mathbf{S}\Rightarrow A\perp_\M B\mid \mathbf{S}$. Then, by condition (iii) at line 65 and Theorem 3.2 in \citet{tillman2011learning} it follows that $X_A\indep X_B\mid \mathbf{X}_\mathbf{S}\Leftrightarrow A\perp_\M B\mid \mathbf{S}$.
\end{proof}

\begin{lemma}\label{lemma:same}
    Let $\G_1,\ldots , \G_n$, $\G$ and $\mathbf{InducingPaths}$ be the graphs and sets of dependent pairs obtained at line 32 of the oracle version of the simple tIOD algorithm (Algorithm \ref{alg:tiod}). Then these are identical to the graphs $\G_1,\ldots , \G_n$, $\G$ and and sets of dependent pairs $\mathbf{InducingPaths}$ obtained after line 30 of Algorithm 2 in \citet{tillman2011learning} (the IOD algorithm).
\end{lemma}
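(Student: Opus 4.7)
My plan is to verify that every divergence between the simple tIOD algorithm (lines 1--32) and the corresponding portion of the IOD algorithm (up through line 30 of Algorithm 2 in \citet{tillman2011learning}) is immaterial under oracle knowledge. There are essentially three such divergences: (a) the adjacency tests (lines 3--13) restrict candidate separating sets to the past; (b) the possible-$D$-separation phase (lines 19--26) restricts likewise; and (c) the v-structure rules (lines 14--18 and 27--31) include an extra tiered sufficient condition for collider orientation. The main tools will be Proposition \ref{prop:past} for (a) and (b), and a direct ancestry argument for (c).

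For (a), the marginal MAG $\M_{\V_l}$ obtained from the true joint MAG $\M$ by latent-projection onto $\V_l$ is consistent with the restriction $\tau|_{\V_l}$, since $\M$ is consistent with $\tau$ on $\V$. Proposition \ref{prop:past} applied to $\M_{\V_l}$ then guarantees that any m-separable pair $V_i, V_j \in \V_l$ is m-separable by a subset of $\mathrm{past}^\tau_{\V_l}(V_i)$ or $\mathrm{past}^\tau_{\V_l}(V_j)$. Under oracle independence and faithfulness, restricting the separating set search to the past still uncovers every true conditional independence, so the same edges are removed from $\G_l$ and from $\G$, and $\mathrm{sepset}(\G_l,\{V_i,V_j\})$ is defined for exactly the same pairs as in IOD. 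Argument (b) is essentially identical, combining Proposition \ref{prop:past} with the standard FCI/IOD fact that a separating set may be taken to lie inside $\possdsepp{\G_l}{V_i,V_j}$ \citep{colombo2012learning}, so the same edges are removed and the same pairs are added to $\mathbf{InducingPaths}$.

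For (c), consider an unshielded triple $\langle V_i, V_j, V_k\rangle$ in $\G_l$ with $\tau(V_j) > \max(\tau(V_i),\tau(V_k))$. In a MAG, a non-collider at $V_j$ on such a triple requires at least one tail at $V_j$, which forces $V_j$ to be a parent and hence an ancestor of $V_i$ or $V_k$ in $\M_{\V_l}$. The tiered ordering forbids this, so $V_j$ must be a collider in $\M_{\V_l}$. By faithfulness, no separating set of $V_i$ and $V_k$ contains $V_j$, so the sepset found by IOD (however it chooses) also excludes $V_j$, and IOD orients the same triple as a v-structure. For unshielded triples where the tiered criterion does not fire, both algorithms reduce to the rule ``$V_j \notin \mathrm{sepset}$'' applied to sepsets shown in (a) to agree on collider/non-collider status. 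Combining (a)--(c), each iteration of the per-dataset loop updates $\G_l$, $\G$, and $\mathbf{InducingPaths}$ identically in both algorithms, so the objects at line 32 of tIOD coincide with those obtained after line 30 of Algorithm 2 in \citet{tillman2011learning}. The delicate part is (c): sepsets are not unique, so a priori tIOD's restricted search could produce sepsets differing from IOD's; this is resolved by the oracle-faithfulness observation that on an unshielded triple the presence of $V_j$ in \emph{any} separating set is completely determined by the true MAG.
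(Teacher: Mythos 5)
Your proposal is correct and follows essentially the same route as the paper's proof: both decompose the claim into (i) agreement of the edge-removal/\textbf{InducingPaths} phases, handled via Proposition \ref{prop:past} together with the containment of $\mathrm{dsep}$ in the (past-restricted) possible-d-separation sets, and (ii) agreement of the arrowhead orientations, justified by the fact that any collider implied by the tiered ordering would also be found from the oracle conditional independencies. Your part (c) merely spells out in more detail what the paper states tersely (the ancestry argument showing the tiered condition forces a collider, and the observation that membership of $V_j$ in \emph{any} separating set of an unshielded triple is determined by the true MAG, which disposes of the non-uniqueness of sepsets).
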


\begin{proof}
We  refer to Algorithm \ref{alg:tiod} as the tIOD (algorithm) and Algorithm 2 in \citet{tillman2011learning} as the IOD (algorithm).

Consider a fixed $i\in\{1,\ldots,n\}$, let $A,B\in \V_i$ and let $\possdsepp{\G_i}{A,B}^\tau$ and $\possdsepp{\G_i}{B,A}^\tau$  be sets of nodes considered at line 20 of the tIOD algorithm. Let $\M_i$ be the true MAG over $V_i$. Then, $A$ and $B$ are m-separated in $\M_i$ if and only if they are m-separated by $\dsep{\M_i}{A,B}$ or $\dsep{\M_i}{B,A}$ (Definition \ref{def:dsep}) by Lemma \ref{lemma:dsep}. Since $\dsep{\M_i}{A,B}\subseteq\possdsepp{\G_i}{A,B}^\tau\cap\mathrm{past}^\tau_\V(A,B)$ and $\dsep{\M_i}{B,A}\subseteq\possdsepp{\G_i}{B,A}^\tau\cap\mathrm{past}^\tau_\V(A,B)$ (c.f. proof of Lemma \ref{lemma:possdsep}), the correct skeleton is obtained a line 26. Let $\possdsepp{\G_i}{A,B}$ and $\possdsepp{\G_i}{B,A}$ be sets of nodes considered at line 18 of the IOD algorithm. Then $\possdsepp{\G_i}{A,B}^\tau\cap\mathrm{past}^\tau_\V(A,B)\subseteq \possdsepp{\G_i}{A,B}$ and $\possdsepp{\G_i}{B,A}^\tau\cap\mathrm{past}^\tau_\V(A,B)\subseteq \possdsepp{\G_i}{B,A}$. Hence, the tIOD and IOD  obtain the same skeleton of $\G_i$, and the same $\mathbf{InducingPaths}$ for $V_i$. Since this must be the case for every $i$, the two algorithms  also obtain the same $\mathbf{InducingPaths}$ and the same skeleton of $\G$ after line 26 of the tIOD algorithm and line 24 of the IOD algorithm.

Next, the algorithms differ at lines 27-32 of the tIOD  (line 25-29 in the IOD). In the tIOD, some arrowheads are oriented based on the tiered ordering, and some are oriented based on the conditional independencies. However, since the tiered background knowledge is assumed to be correct, any orientation by background knowledge could have been oriented solely by oracle knowledge of the conditional independencies. Here, the IOD orients arrowheads only based on conditional independencies. However, under oracle knowledge, both algorithms use the same conditional independencies, and since every $\G_i$ has the same skeleton in both algorithms, they orient the same arrowheads in $\G$.
\end{proof}

\begin{lemma}\label{lemma:adj}
    Let $\M$ be a MAG over $\V=\V_1\cup\ldots\cup\V_n$ such that for all $i\in\{1,\ldots ,n\}$ $A,B\in\V_i$ and $\mathbf{S}\subseteq\V_i\backslash\{A,B\}$ $A\perp_\M B\mid \mathbf{S}\Leftrightarrow X_A\indep X_B\mid\X_\mathbf{S}$, then some graph considered at line 38 of Algorithm \ref{alg:tiod} has the same skeleton as $\M$.
\end{lemma}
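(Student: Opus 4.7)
The plan is to produce a subset $\E' \subseteq \mathbf{RemoveEdges}$ for which $\G_{\E\backslash\E'}$ has exactly the skeleton of $\M$; this graph is then enumerated at line 38.

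The first step is to observe that the skeleton of $\G$ (obtained at line 32) contains that of $\M$. Any $\M$-adjacent pair is m-inseparable in $\M$, and by the hypothesis of the lemma the same holds in the distribution restricted to any $\V_l$, so neither CI-testing phase in any dataset can remove this edge from $\G_l$, and hence not from $\G$. Combined with Lemma \ref{lemma:same}, $\G$ coincides with the graph produced by the oracle IOD at the analogous step. I then set $\E'$ to be the collection of edges in $\G$ that are absent in $\M$; with this choice $\G_{\E\backslash\E'}$ has the correct skeleton by construction.

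The main step is to verify $\E'\subseteq \mathbf{RemoveEdges}$. Fix $\{V_i \any V_j\}\in\E'$. Since this pair is m-separable in $\M$, Proposition \ref{prop:past} together with Lemma \ref{lemma:dsep} yields a separating set $\mathbf{S}$ with $\mathbf{S}\subseteq\dsep{\M}{V_i,V_j}$ or $\mathbf{S}\subseteq\dsep{\M}{V_j,V_i}$; since these $\mathrm{dsep}$ sets consist only of ancestors of $V_i$ or $V_j$, we have $\mathbf{S}\subseteq\mathrm{past}^\tau_\V(\{V_i,V_j\})$. Assume for contradiction the edge is not in $\mathbf{RemoveEdges}$. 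Then the negation of the condition in lines 33--37 supplies some $l$ with $V_i,V_j\in\V_l$ and, without loss of generality, $(\possdsepp{\G}{V_i,V_j}\cup\possdsepp{\G}{V_j,V_i})\cap\mathrm{past}^\tau_\V(\{V_i,V_j\})\subseteq\V_l$. Invoking the inclusion $\dsep{\M}{V_i,V_j}\subseteq\possdsepp{\G}{V_i,V_j}$ (and symmetrically), established analogously to Lemma \ref{lemma:possdsep} and mirroring the corresponding step in the proof of Lemma~7.6 of \citet{tillman2011learning}, forces $\mathbf{S}\subseteq\V_l$.

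The contradiction then closes as follows: since $\mathbf{S}\subseteq\V_l$ and $\mathbf{S}$ m-separates $V_i$ and $V_j$ in $\M$, the same separation holds in the marginal MAG over $\V_l$, and by hypothesis $X_{V_i}\indep X_{V_j}\mid\X_\mathbf{S}$. The analogue of Lemma \ref{lemma:possdsep} applied within dataset $l$ places $\mathbf{S}$ inside $\possdsepp{\G_l}{V_i,V_j}\cap\mathrm{past}^\tau_{\V_l}(\{V_i,V_j\})$, so this separator would have been tested at line 20, and the edge would have been removed from both $\G_l$ and $\G$, contradicting $\{V_i\any V_j\}\in\G$. The principal obstacle is the two-level transfer of the $\mathrm{dsep}$ set of $\M$ into the algorithm's $\possdsepp$ sets---once at the level of the integrated graph $\G$ (to invoke the $\mathbf{RemoveEdges}$ condition) and once at the level of $\G_l$ (to argue the algorithm would have detected the independence during the second CI phase of dataset $l$).
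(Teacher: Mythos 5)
Your proof is correct and follows essentially the same route as the paper's: both rest on $\G$ containing a superset of $\M$'s adjacencies, the containment of $\dsep{\M}{\cdot\,,\cdot}$ in the algorithm's restricted search sets $\possdsepp{\G}{\cdot\,,\cdot}\cap\mathrm{past}^\tau_\V(\cdot)$, and the observation that any separating set contained in some $\V_l$ would already have caused the edge's removal from $\G_l$ (and hence from $\G$); you merely arrange this as a contradiction where the paper verifies the $\mathbf{RemoveEdges}$ condition directly. The only cosmetic slip is the claim that $\mathbf{S}$ itself lands in $\possdsepp{\G_l}{V_i,V_j}$ --- what is actually needed, and what suffices, is that separability of $V_i,V_j$ by a subset of $\V_l$ guarantees removal by the per-dataset skeleton phase, as already established via Lemma \ref{lemma:same}.
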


\begin{proof}
    Let $\G$  be the graph considered at line 33 of Algorithm \ref{alg:tiod}. By Lemma \ref{lemma:same} it follows from Lemma 7.4 in \citet{tillman2011learning} that $\G$  contains a superset of the adjacencies in $\M$. Let $A$ and $B$ be  a pair of nodes that are adjacent in $\G$ but not in $\M$. Then, by maximality of $\M$, there is some set $\mathbf{S}\subseteq\V\backslash\{A,B\}$ that m-separates $A$ and $B$ in $\M$. It then follows from Corollary \ref{cor:sep2} that for all $i$, no such  $\mathbf{S}$ is a subset of $\V_i$. One such $\mathbf{S}$ is in $\adj{\M}{A}\cap\mathrm{past}^\tau_\V(A)\subseteq\adj{\G}{A}\cap\mathrm{past}^\tau_\V(A)$ or $\left( \possdsepp{\M}{A,B}\cup\possdsepp{\M}{B,A}\right)\cap\mathrm{past}^\tau_\V(\{A, B\})\subseteq\left( \possdsepp{\G}{A,B}\cup\possdsepp{\G}{B,A}\right)\cap\mathrm{past}^\tau_\V(\{A, B\})$, or $\mathbf{S}$ is in $\adj{\M}{B}\cap\mathrm{past}^\tau_\V(B)\subseteq\adj{\G}{B}\cap\mathrm{past}^\tau_\V(B)$ or \\$\left( \possdsepp{\M}{A,B}\cup\possdsepp{\M}{B,A}\right)\cap\mathrm{past}^\tau_\V(\{A, B\})\subseteq\left( \possdsepp{\G}{A,B}\cup\possdsepp{\G}{B,A}\right)\cap\mathrm{past}^\tau_\V(\{A, B\})$. Hence, $A$ and $B$ are added to $\mathbf{RemoveEdge}$. Since all combinations of edge removals are considered, one graph must have the same skeleton as $\M$.
\end{proof}

\begin{lemma}\label{lemma:vstruct}
    Let $\M$ be a MAG over $\V$ such that for all $i\in\{1,\ldots ,n\}$ $A,B\in\V_i$ and $\mathbf{S}\subseteq\V_i\backslash\{A,B\}$ $A\perp_\M B\mid \mathbf{S}\Leftrightarrow X_A\indep X_B\mid\X_\mathbf{S}$, then some graph considered at line 49 of Algorithm \ref{alg:tiod} has the same v-structures as $\M$.
\end{lemma}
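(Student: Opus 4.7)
The plan is to follow the structure of Lemma 7.6 of \citet{tillman2011learning}, with additional care for the two new v-structure orientation mechanisms introduced by the tiered background knowledge (at lines 14--18/27--31 via cross-tier triples, and at lines 41--42 post-skeleton). I would begin by invoking Lemma \ref{lemma:adj} to fix some $\E^{*} \in \mathscr{P}(\mathbf{RemoveEdge})$ such that $\G_{\E \setminus \E^{*}}$ has the same skeleton as $\M$, and restrict attention to the iteration of the outer loop at line 38 with this $\E^{*}$. Since the skeleton coincides with that of $\M$, the unshielded triples in $\G_{\E \setminus \E^{*}}$ are exactly the unshielded triples of $\M$ on the same node sets, so it suffices to exhibit a choice $\V^{*} \in \mathscr{P}(\mathbf{OrientVstructure})$ that produces precisely the v-structures of $\M$.

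The core step is a case analysis on each unshielded triple $\langle A, C, B\rangle$. In Case~1, there exists some $l$ with $A, C, B \in \V_{l}$ and $\mathrm{sepset}(\G_{l}, \{A, B\})$ defined. Then $\mathrm{sepset}(\G_{l}, \{A, B\})$ m-separates $A$ and $B$ in the marginal MAG $\M_{\V_{l}}$, and hence in $\M$, by Corollary \ref{cor:sep2} together with the same reasoning used in Proposition \ref{prop:simpletfci} (the past-restricted sepset search being justified by Proposition \ref{prop:past}). Under faithfulness, $C \notin \mathrm{sepset}(\G_{l}, \{A, B\})$ iff $\langle A, C, B\rangle$ is a v-structure in $\M$, so lines 14--18 classify the triple correctly in $\G_{l}$ and lines 27--31 propagate the orientation to $\G$; moreover the condition at line 40 fails, so the triple is not re-added to $\mathbf{OrientVstructure}$. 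In Case~2 the line 40 condition holds. If $\tau(C) > \max(\tau(A), \tau(B))$, the correctness of the tiered background knowledge excludes a tail at $C$ on either incident edge (a tail at $C$ in a selection-bias-free MAG would give $C \to A$ or $C \to B$, making $C$ an ancestor of an earlier-tier node), so both edges have arrowheads at $C$ and, since the triple is unshielded, it is a v-structure in $\M$, correctly oriented at lines 41--42. Otherwise the triple is added to $\mathbf{OrientVstructure}$.

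Taking $\V^{*} \subseteq \mathbf{OrientVstructure}$ to consist exactly of those triples that are v-structures in $\M$, the inner loop at line 48 will reach $\V^{*}$, at which point $\G_{\E \setminus \E^{*}}^{\V^{*}}$ has precisely the v-structures of $\M$ and is the graph considered at line 49. The main obstacle I expect is Case~1: one has to argue cleanly that the past-restricted stored sepset still satisfies the standard ``$C \notin \mathrm{sepset}$ iff $C$ is a collider'' characterisation, and that tiered-knowledge-based orientations never clash with those found from conditional independencies. Both follow from Proposition \ref{prop:past}, Corollary \ref{cor:sep2}, and correctness of the tiered background knowledge, but they need to be spelled out to mirror the structure of the corresponding argument in \citet{tillman2011learning}.
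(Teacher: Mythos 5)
Your proposal is correct and follows essentially the same route as the paper's proof: fix a skeleton matching $\M$ via Lemma \ref{lemma:adj}, then argue that every unshielded triple is either already correctly oriented, forced as a v-structure by the tiered ordering at lines 41--42, or placed in $\mathbf{OrientVstructure}$, so that one choice of $\V^{*}$ reproduces exactly the v-structures of $\M$. The only difference is cosmetic: the paper obtains the ``no spurious v-structures'' direction by citing Lemma \ref{lemma:same} together with Lemma 7.5 of \citet{tillman2011learning}, whereas you re-derive it directly in your Case~1/Case~2 analysis.
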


\begin{proof}
Let $\G$ be a graph constructed at line 38 of Algorithm \ref{alg:tiod} and assume that $\G$ has the same skeleton as $\M$ (such a $\G$ exists by Lemma \ref{lemma:adj}). By Lemma \ref{lemma:same} it follows from Lemma 7.5 in \citet{tillman2011learning} that $\G$ contains a subset of the v-structures in $\M$. Let $\langle A, B, C\rangle$ be an unshielded triple in $\G$ and $\M$, and assume that this is a v-structure in  $\M$ but not in $\G$. If $\tau(B)>\max(\tau(A), \tau(C))$, then every graph considered at line 49 must have  $\langle A, B, C\rangle$ oriented as a v-structure. If $\tau(B)\leq\max(\tau(A), \tau(C))$ and there were an $i$ where $B\in\V_i$ and $\mathrm{sepset}(\G_i, \{A,C\})$ were defined, then the v-structure would have been oriented in $\G$ (line 29 of Algorithm \ref{alg:tiod}).   Hence, $\langle A, B, C\rangle$ is added to $\mathbf{OrientVstructure}$. Since all combinations of possible v-structures are considered, one graph must have the same v-structures as $\M$. 
\end{proof}

\begin{lemma}\label{lemma: consistent}
    Let $\G$ be a PAG. Then $\G$ is consistent with the tiered ordering $\tau$ if and only if some $\M\in [\G]$ satisfies criterion (iv) at line 65 in Algorithm \ref{alg:tiod}.
\end{lemma}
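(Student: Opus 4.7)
The forward direction is immediate from Proposition \ref{prop:past}. If $\G$ is consistent with $\tau$, then by definition there is a MAG $\M\in[\G]$ encoding $\K_\tau$, so $\tau$ is consistent with $\M$. Proposition \ref{prop:past} then guarantees that for every non-adjacent pair $A, B$ in $\M$ there exists a separating set contained in $\mathrm{past}^\tau_\V(A)$ or in $\mathrm{past}^\tau_\V(B)$. Since both of these sets are subsets of $\mathrm{past}^\tau_\V(A, B)=\mathrm{past}^\tau_\V(A)\cup\mathrm{past}^\tau_\V(B)$, criterion (iv) holds for $\M$.

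For the reverse direction, assume some $\M\in[\G]$ satisfies (iv). Because Markov-equivalent MAGs share both the skeleton and the independence model, the condition in (iv) is really a property of $[\G]$ itself. The plan is to exhibit some $\M'\in[\G]$ that encodes $\K_\tau$. The natural construction is to re-orient $\M$ tier by tier: keep the skeleton of $\M$; on every cross-tier edge $\{V_i,V_j\}$ with $\tau(V_i)<\tau(V_j)$ place an arrowhead at $V_j$ and take the mark at $V_i$ to be a tail whenever $\M$ had a tail there and an arrowhead otherwise; keep the orientations of same-tier edges unchanged. By construction, $\M'$ contains no edge in $\F_\tau$. It then remains to verify that (a) $\M'$ is a MAG---ancestral, because all directed edges respect $\tau$ so that neither directed nor almost-directed cycles can arise, and maximal, because the skeleton is that of $\M$ so no new inducing paths are created; and (b) $\M'\in[\G]$, which, given the common skeleton, reduces to showing Markov equivalence with $\M$ (so that $\M'$ carries all marks of $\G$).

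The main obstacle is step (b). One strategy is to use the characterisation of Markov equivalence of MAGs through equal skeleton and equal colliders-with-order on discriminating paths \citep{zhang2008completeness}: criterion (iv) is precisely the lever that keeps these invariants intact, since every non-adjacent pair retains a past-restricted separating set and every newly introduced arrowhead points to a later-tier node which cannot lie in any such past-restricted separating set. A cleaner, algorithmic alternative is to run the simple tFCI (Proposition \ref{prop:simpletfci}) on $\I(\M)$ with the tiered ordering $\tau$: by (iv) the past-restricted conditional independence tests still suffice to remove exactly the right edges, so the output PAG is $\G$. Augmenting by $\K_\tau$ via Algorithm \ref{alg:pk} and closing under rules R1--R4 and R8--R10 yields the full tFCI output; soundness (Proposition \ref{prop:fulltfci}) ensures that this PMG represents a non-empty subset of $[\G]$ whose elements all encode $\K_\tau$, which is exactly the statement that $\G$ is consistent with $\tau$.
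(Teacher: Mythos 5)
Your forward direction (consistency of $\G$ with $\tau$ implies criterion (iv)) is correct, and it is in fact a more direct route than the paper's: the paper argues the contrapositive by exhibiting an m-connecting path through $\V\backslash\mathrm{past}^\tau_\V(\{A,B\})$ that forces an arrowhead into $A$ or $B$ from a later tier, whereas you simply combine maximality of $\M$ with Proposition \ref{prop:past}. That half stands.

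The reverse direction, however, contains a genuine gap that you flag but do not close: nothing in your argument actually establishes that $[\G]$ contains a MAG encoding $\K_{\tau}$. Your re-orientation of $\M$ is not Markov-equivalence preserving in general: replacing a forbidden edge $V_i\leftarrow V_j$ (with $\tau(V_i)<\tau(V_j)$) by $V_i\leftrightarrow V_j$ can remove $V_j$ from $\an{\M}{V_i}$ and turns $V_j$ into a collider on any path of the form $W\rightany V_j\leftrightarrow V_i$, which can destroy separations and create new unshielded colliders; asserting that criterion (iv) ``is precisely the lever'' that prevents this is not an argument, since (iv) is a purely separational condition on non-adjacent pairs and you never connect it to the preservation of colliders with order. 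Moreover, if $\G$ has an invariant tail at $V_j$ on some cross-tier edge, your $\M'$ is not in $[\G]$ at all, and the sketch does not rule this out. The tFCI-based alternative is circular: Propositions \ref{prop:simpletfci} and \ref{prop:fulltfci} are proved under the standing assumption that the tiered background knowledge is correct, i.e.\ consistent with the underlying MAG---which is exactly what the reverse direction must deliver---and soundness of the full tFCI only says the output PMG contains no marks absent from $\M$, not that the set of MAGs in $[\G]$ encoding $\K_{\tau}$ is non-empty. The paper's own argument for this direction works at the level of the PAG: it shows that under criterion (iv) no m-connecting path between a non-adjacent pair can be forced through $\V\backslash\mathrm{past}^\tau_\V(\{A,B\})$, so orienting the cross-tier circle marks of $\G$ creates no new v-structures and these orientations remain realisable within $[\G]$. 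Some version of that step---tying (iv) to the admissibility of the cross-tier orientations inside the equivalence class---is what your proof is missing.
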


\begin{proof}
$\G$ is consistent with $\tau$ if there exist an $\M\in[\G]$ that encodes the background knowledge implied by $\tau$. Let $\M\in[\G]$. 

     ``\emph{If}'': Let $A$ and $B$ non-adjacent in $\M$. If for every path $\pi$ between $A$ and $B$ in $\M$ that only goes through nodes in  $\V\backslash\mathrm{past}_\V^\tau(\{A,B\})$, $\pi$ is not m-connecting given  $\mathrm{past}_\V^\tau(\{A,B\})$, then the orientation of cross-tier edges in $\G$, which is the PAG of $\M$, will not construct any new v-structures.

      ``\emph{Only if}'': Let $A$ and $B$ be nodes in $\M$ and assume that (iv) at line 65 is not satisfied. Let $\pi$ be a path between $A$ and $B$ that goes through nodes in $\V\backslash\mathrm{past}_\V^\tau(\{A,B\})$. Assume that $\pi$ is m-connecting given every $\mathbf{S}\subset \mathrm{past}^\tau_\V(\{A,B\})$. Then, $\pi$ will have either a directed edge that is into $A$, one that is into $B$, or both. However, $\tau$ implies that the edge between $A$ and the next node on $\pi$ cannot be a directed edge into $A$ (similar for $B$). I.e. $\M$ does then not encode the tiered background knowledge implied by $\tau$. Since such a path exists for any MAG that is Markov equivalent to $\M$, $\G$ is not consistent with $\tau$.
\end{proof}

\renewcommand*{\proofname}{Proof of Proposition \ref{prop:simpletiod}}

\begin{proof}
Let $\G\in\mathbfcal{G}$. First, it follows directly from Lemma 7.3 in \citet{tillman2011learning} that $\G$ is a PAG. Second, $\G$ is only added to $\mathbfcal{G}$ if it satisfies criterion (iv) at line 65. Hence, by Lemma \ref{lemma: consistent} it follows that $\G$ is consistent with $\tau$. We need to show the following:

\begin{itemize}[align=left]
    \item[Soundness:] For all PAGs $\G\in\mathbfcal{G}$  and for all $i\in\{1,\ldots,n\}$ $A,B\in V_i$ and $\mathbf{S}\subseteq V_i\backslash\{ A,B\}$ for all $\M\in[\G]$ $A\perp_\M B\mid \mathbf{S}\Leftrightarrow X_A\indep X_B\mid\X_\mathbf{S}$.
    
    \item[Completeness:] Let $\M$ be a MAG over $\V$ such that the PAG of $\M$ is consistent with $\tau$ and for all $i\in\{1,\ldots ,n\}$ $A,B\in\V_i$ and $\mathbf{S}\subseteq\V_i\backslash\{A,B\}$ $A\perp_\M B\mid \mathbf{S}\Leftrightarrow X_A\indep X_B\mid\X_\mathbf{S}$, then $\M\in[\G]$ for some $\G\in\mathbfcal{G}$.

\end{itemize}

Soundness:  Let $\M$ be the graph obtained at line 62. By Theorem 2 in \citet{zhang2008completeness} $\M$ is a MAG, and by Lemma \ref{lemma:sep} it holds that for all $i\in\{1,\ldots,n\}$, $A,B\in\V_i$ and $\mathbf{S}\subseteq\V_i\backslash\{A,B\}$: $X_A\indep X_B\mid\mathbf{X}_\mathbf{S}\Leftrightarrow A\perp_\M B\mid \mathbf{S}$. Since $\M\in[\G]$, for any other $\M'\in[\G]$ it  also holds that $X_A\indep X_B\mid\mathbf{X}_\mathbf{S}\Leftrightarrow A\perp_{\M'} B\mid \mathbf{S}$.

    Completeness: 
    We follow the proof of Theorem 5.2 in \citet{tillman2011learning}.  By Lemma \ref{lemma:vstruct} there is a graph $\G'$ considered at line 50 of Algorithm \ref{alg:tiod} that has the same skeleton and v-structures as $\M$. Assume that $\G'$ also contains arrowheads or tails that are not in $\M$: Then this has been oriented at line 27. Following the proof of Theorem 5.2 in \citet{tillman2011learning}, this must be an arrowhead that is into the collider in a v-structure in some MAG $\M_i=(\V_i,\E_i)$, for some $i\in\{1,\ldots,n\}$, such that $\M$ and $\M_i$ encode the same m-separations over $\V_i$, and by Corollary 7.2 in \citet{tillman2011learning}, this arrowhead must then be contained in the PAG representing the equivalence class of $\M$. Hence, this orientation must also be in $\M$. By the soundness and completeness of the orientation rules R1-R4 and R8-R10 \citep{zhang2008completeness}, the PAG $\G$ of $\M$ is contained in $\mathbf{Possible}\mathbfcal{G}$. By Theorem 2 in \citet{zhang2008completeness}, given $\G$, at line 62 we obtain a MAG that is Markov equivalent to $\M$. For this MAG, criteria (i) and (ii) at line 65 are satisfied, and by Theorem 4.2 in \citet{richardson2002ancestral} criterion (iii) is also satisfied. Hence, the PAG of $\M$ is added to $\mathbfcal{G}$.
\end{proof}

\renewcommand*{\proofname}{Proof of Proposition \ref{prop:fulltiod}}

\begin{proof}
Let $\M$ be a MAG over $\V=\V_1\cup\ldots\cup\V_n$ that encodes $\tau$. Consider the full tIOD algorithm using oracle knowledge of the marginal independence models over $\V_1,\ldots,\V_n$ and the background knowledge implied by $\tau$. Let $\G'$ be the graph considered at line 59 and assume that conditions (i), (ii), (iii) and (iv) at line 65 are satisfied. Assume that $\G'$ is the PAG of $\M$ (by Proposition \ref{prop:simpletiod} some $\G'$ will be the PAG of $\M$). We will argue that the additional arrowheads and tails obtained in the full tIOD algorithm are correct: The arrowhead orientation at line 69 is sound since we assumed the tiered background knowledge to be correct. The orientation rules R1-R4 and R8-R10 are sound in the sense that they prevent the encoded independence model to change, any new cycles to occur, or any orientations that would contradict the ancestral relations encoded \citep{spirtes1999fci, zhang2008completeness}. Hence, they do not introduce any new information not already known from the independence models and background knowledge. Moreover, for every possible orientation implied by R4 we include both graphs. Hence, at least one graph output encodes the independence model of $\M$ and does not contain any arrowheads and tails not in $\M$.
\end{proof}

\renewcommand*{\proofname}{Proof of Proposition \ref{prop:informativeness}}

\begin{proof}
We refer to Algorithm \ref{alg:tiod} as the tIOD (algorithm) and Algorithm 2 and 3 in \citet{tillman2011learning} as the IOD (algorithm).

In the first part of the algorithms (line 32 of the tIOD  and line 30 of Algorithm 2 in \citet{tillman2011learning}), the IOD algorithm and simple tIOD obtain the same graph $\G$ over $\V$ (Lemma \ref{lemma:same}). The only other part later in the simple tIOD algorithm where tiered background knowledge is being used is when constructing $\mathbf{RemoveEdge}$ and $\mathbf{OrientVstructure}$. Like the IOD, the tIOD visits all graphs that can be constructed from $\G$ from all combinations of edge removals in $\mathbf{RemoveEdge}$ and v-structured orientations in $\mathbf{OrientVstructure}$. Let $\mathbf{RemoveEdge}$ and $\mathbf{OrientVstructure}$ be obtained in the IOD algorithm (Algorithm 3 in \citet{tillman2011learning}), and let $\mathbf{RemoveEdge}^\tau$ and $\mathbf{OrientVstructure}^\tau$ be obtained in the tIOD algorithm. Then the tIOD algorithm visits fewer graphs than the IOD algorithm if and only if either\\ $|\mathbf{RemoveEdge}|>|\mathbf{RemoveEdge}^\tau|$, $|\mathbf{OrientVstructure}|>|\mathbf{OrientVstructure}^\tau|$, or both $|\mathbf{RemoveEdge}|>|\mathbf{RemoveEdge}^\tau|$  and $|\mathbf{OrientVstructure}|>|\mathbf{OrientVstructure}^\tau|$. We show that this is equivalent to conditions (i) and (ii).

First, note that it must always be the case that $|\mathbf{RemoveEdge}^\tau|\leq|\mathbf{RemoveEdge}|$ since for any $A,B$ 
\begin{align*}
\{A,B\}\cup\left(\adj{\G}{A}\cap\mathrm{past}^\tau_\V(A)\right)&\cup\left(\left(\possdsepp{\G}{A,B}\cup\possdsepp{\G}{B,A}\right)\cap\mathrm{past}^\tau_\V(A,B)\right)\not\subset\V_i\Rightarrow \\
\{A,B\}\ \cup\ &\adj{\G}{A}\cup\possdsepp{\G}{A,B}\cup\possdsepp{\G}{B,A}\not\subset\V_i
\end{align*}
and
\begin{align*}
\{A,B\}\cup\left(\adj{\G}{B}\cap\mathrm{past}^\tau_\V(B)\right)&\cup\left(\left(\possdsepp{\G}{A,B}\cup\possdsepp{\G}{B,A}\right)\cap\mathrm{past}^\tau_\V(A,B)\right)\not\subset\V_i\Rightarrow \\
\{A,B\}\ \cup\ &\adj{\G}{B}\cup\possdsepp{\G}{A,B}\cup\possdsepp{\G}{B,A}\not\subset\V_i
\end{align*}
Moreover, it also always holds that $|\mathbf{OrientVstructure}^\tau|\leq|\mathbf{OrientVstructure}|$: If a triple is added to $\mathbf{OrientVstructure}^\tau$ it is also added to $\mathbf{OrientVstructure}$ since any unshielded triple obtained from $\mathbf{RemoveEdge}^\tau$ will also be obtained from $\mathbf{RemoveEdge}$.

\emph{``If:''} \emph{Condition (i):}: Since A and B have been measured together in at least one dataset, $X_A$ and $X_B$ are not marginally independent, so if the edge between $A$ and $B$ can be removed, it must be because there exists an $\mathbf{S}\subseteq\V$ that separates $A$ and $B$. The IOD algorithm searches for such a set in $\adj{\G}{A}\cup\possdsepp{\G}{A,B}\cup\possdsepp{\G}{B,A}$ and $\adj{\G}{B}\cup\possdsepp{\G}{A,B}\cup\possdsepp{\G}{B,A}$. If for all $i$ 
 \begin{align*}
     \{A\}\cup\adj{\G}{A}\cup\possdsepp{\G}{A,B}\cup\possdsepp{\G}{B,A}\not\subset\V_i\text{ and }\\ \{B\}\cup\adj{\G}{B}\cup\possdsepp{\G}{A,B}\cup\possdsepp{\G}{B,A}\not\subset\V_i
 \end{align*}
then $A\any B$ is added to $\mathbf{RemoveEdge}$. If for some $i$ $A,B\in\V_i$ and
\begin{align*}
    \left(\possdsepp{\G}{A,B}\cup\possdsepp{\G}{B,A}\right)\backslash\V_i\subseteq\V\backslash\mathrm{past}^{\tau}_\V(A,B)\text{ and }
    \adj{\G}{A}\backslash\V_i\subseteq\V\backslash\mathrm{past}^\tau_\V(A)
\end{align*}
then 
\begin{align*}
    \left(\possdsepp{\G}{A,B}\cup\possdsepp{\G}{B,A}\right)\cap\mathrm{past}^{\tau}_\V(A,B)\subseteq\V_i\text{ and } \adj{\G}{A}\cap\mathrm{past}^\tau_\V(A)\subseteq\V_i
\end{align*} 
 and it then follows that 
\begin{align*}
    \{A,B\}\cup\left(\left(\possdsepp{\G}{A,B}\cup\possdsepp{\G}{B,A}\right)\cap\mathrm{past}^{\tau}_\V(A,B)\right)\cup\left(\adj{\G}{A}\cap\mathrm{past}^\tau_\V(A)\right)\subseteq\V_i
\end{align*}
and $A\any B$ is not added to $\mathbf{RemoveEdge}^\tau$ and $|\mathbf{RemoveEdge}|>|\mathbf{RemoveEdge}^\tau|$. The same holds for $\adj{\G}{B}\backslash\V_i\subseteq\V\backslash\mathrm{past}^\tau_\V(B)$.

\emph{Condition (ii):}  Now, let $\langle A, C, B\rangle$ be an unshielded triple where for all $i$, either $C\notin\V_i$ or $\mathrm{sepset}(\G_i,\{A, B\})$ is undefined. Then $\langle A, C, B\rangle$ is added to $\mathbf{OrientVstructure}$. However, if $\tau(C)>\max(\tau(A),\tau(B))$, this is oriented as a v-structure in the tIOD algorithm (line 42 in Algorithm \ref{alg:tiod}), and not added to $\mathbf{OrientVstructure}^\tau$ and $\mathbf{OrientVstructure}|>|\mathbf{OrientVstructure}^\tau|$.

\emph{``Only if:''} Assume that neither is satisfied. \emph{Condition (i):} Assume that for all adjacent $A$ and $B$ for which it holds that for all $i$
 \begin{align}\label{proof11}
     \{A\}\cup\adj{\G}{A}\cup\possdsepp{\G}{A,B}\cup\possdsepp{\G}{B,A}\not\subset\V_i\text{ and } 
\end{align}
\begin{align*}
     \{B\}\cup\adj{\G}{B}\cup\possdsepp{\G}{A,B}\cup\possdsepp{\G}{B,A}\not\subset\V_i
\end{align*}
it is either the case that $i$ $A$ and $B$ are not in $\V_i$, or
$$\left(\possdsepp{\G}{A,B}\cup\possdsepp{\G}{B,A}\right)\backslash \V_i\not\subset\V\backslash\mathrm{past}^\tau_\V(A,B)\quad\text{or}$$
$$\adj{\G}{A}\backslash\V_i\not\subset\V\backslash\mathrm{past}_\V^\tau(A)\,\,\text{ and }\,\, \adj{\G}{B}\backslash\V_i\not\subset\V\backslash\mathrm{past}_\V^\tau(B)$$
then it follows that
$$\{A,B\}\cup\left(\adj{\G}{A}\cap\mathrm{past}^\tau_\V(A)\right)\cup\left((\possdsepp{\G}{A,B}\cup\possdsepp{\G}{B,A})\cap\mathrm{past}^\tau_\V(\{A, B\})\right)\not\subset\V_i$$ 
and
$$\{A,B\}\cup\left(\adj{\G}{B}\cap\mathrm{past}^\tau_\V(B)\right)\cup\left((\possdsepp{\G}{A,A}\cup\possdsepp{\G}{B,A})\cap\mathrm{past}^\tau_\V(\{A, B\})\right)\not\subset\V_i$$
Hence, for all pairs $A,B$ where (\ref{proof11}) is satisfied $A\any B$ is then added to both $\mathbf{RemoveEdge}$ and $\mathbf{RemoveEdge}^\tau$, and $|\mathbf{RemoveEdge}|=|\mathbf{RemoveEdge}^\tau|$

\emph{Condition (ii):} Assume that for all unshielded triples $\langle A, C, B\rangle$ where for all $i$, either $C\notin\V_i$ or $\mathrm{sepset}(\G_i, \{A, B\})$ is undefined we have that  $\tau(C)\leq\max(\tau(A),\tau(B))$, then all of these triples are added to both $\mathbf{OrientVstructure}$ and $\mathbf{OrientVstructure}^\tau$, and $\mathbf{OrientVstructure}|=|\mathbf{OrientVstructure}^\tau|$.

\end{proof}

\end{document}